\begin{document}

\title{Connectivity for matroids based on rough sets}

\author{Bin Yang and William Zhu~~\thanks{Corresponding author.
E-mail: williamfengzhu@gmail.com (William Zhu)}}
\institute{
Lab of Granular Computing,\\
Minnan Normal University, Zhangzhou 363000, China}



\date{\today}          
\maketitle

\begin{abstract}
In mathematics and computer science, connectivity is one of the basic concepts of matroid theory: it asks for the minimum number of elements which need to be removed to disconnect the remaining nodes from each other. It is closely related to the theory of network flow problems. The connectivity of a matroid is an important measure of its robustness as a network. Therefore, it is very necessary to investigate the conditions under which a matroid is connected.
In this paper, the connectivity for matroids is studied through relation-based rough sets. First, a symmetric and transitive relation is introduced from a general matroid and its properties are explored from the viewpoint of matroids. Moreover, through the relation introduced by a general matroid, an undirected graph is generalized. Specifically, the connection of the graph can be investigated by the relation-based rough sets. Second, we study the connectivity for matroids by means of relation-based rough sets and some conditions under which a general matroid is connected are presented. Finally, it is easy to prove that the connectivity for a general matroid with some special properties and its induced undirected graph is equivalent.
These results show an important application of relation-based rough sets to matroids.\\                                                                                                                                                                                                   

\textbf{Keywords:}~rough set; matroid; relation; undirected graph; connected matroid; connectivity

\end{abstract}


\section{Introduction}

Rough set theory, proposed by Pawlak~\cite{Pawlak82Rough,Pawlak91Rough} in 1982, is a useful
tool for dealing with the vagueness and granularity in information systems. With more than twenty years development, rough set theory
has attracted much research interest in resent years. In theory, it has been extend to generalized rough sets based on reflexive relations~\cite{Kondo05OnTheStructure,QinYangPei08Generalized}, on similarity relations~\cite{SlowinskiVanderpooten00AGeneralized}, on tolerance
relations~\cite{BartolMiroPioroRossello05OntheCoverings,YaoWangZhang10Onfuzzy,SkowronStepaniuk96tolerance}, on arbitrary relations~\cite{Diker10Textural,LiuZhu08TheAlgebraic,Zhu07Generalized}, covering-based rough sets~\cite{ZhuWang03Reduction,Zhu07Topological,Zhu09RelationshipAmong,Zhu09RelationshipBetween}, probabilistic rough sets~\cite{Yao10Three-way,Yao11TheSuperiority}, fuzzy rough sets~\cite{DengChenXuDai07ANovel,GongSunChen08Rough} and so on. In applications, it has
been successfully applied in knowledge discovery~\cite{Liu10Rough}, machine learning~\cite{ChenMiaoWangWu11ARough}, knowledge acquisition~\cite{LeungWuZhang06Knowledge,QianLiangYaoDang,WeiLiZhang07Knowledge} and decision analysis~\cite{Huyuguo10Fuzzypreferencebased,MinLiu09Ahierarchical,QianLiangLiZhangDang08Measures,QianLiangPedryczDang10Positive,XuZhao09Amulti-objective}.

Generalized rough sets based on symmetric and transitive relations are a generalization of classical rough sets. In many real word applications,
this generalization is quite useful. An example is that data in incomplete information/decision systems often generate symmetric and transitive relations~\cite{QianLiangDang10Incomplete}. Therefore, there is profound theoretical and practical significance to study this type of relation-based rough sets.

The concept of matroids was
originally introduced by Whitney~\cite{Whitney35Onthe} in 1935 as a generalization of graph theory and linear algebra.
Matroid theory is a branch of combinatorial mathematics. It is widely used in optimization. Therefore, it is a good idea to
integrate rough set and matroid. Some interesting results about the connection between matroids and rough sets can be found in literatures~\cite{LiLiu12Matroidal,LiuZhu12matroidal,TangSheZhu12matroidal,Wangzhu13Applicationsofmatrices,WangZhuZhuMin12Matroidal,YangZhu13Matroidal,YangZhu13covering-basedrough}.

Connectivity is one of the basic concepts of matroid theory: it asks for the minimum number of elements which need to be removed to disconnect the remaining nodes from each other. It is closely related to the theory of network flow problems. The connectivity of a matroid is an important measure of its robustness as a network. Therefore, it is very necessary to investigate the conditions under which a matroid is connected.

In this paper, whether a general matroid is connected or disconnected is studied by means of relation-based rough sets. For this purpose, a symmetric and transitive relation $R(M)$ is firstly generalized from a general matroid $M$. In fact, if $\bigcup\mathcal{C}(M)=U(M)$, then $R(M)$ is also a reflexive relation. Based on the above works, some properties of $R(M)$ are presented from the viewpoint of matroid.  In~\cite{ChenLiLin13Computing},  Li et al. have studied the problem of computing connected components of a simple undirected graph through relation-based rough sets. Then we can generalize an undirected graph $G(M)$ from $R(M)$. Similarly, the connectivity of $G(M)$ can be investigated by the relation-based rough sets.
Second,  some conditions under which a general matroid is connected are presented from the viewpoint of relation-based rough set. In other words, the connectivity  for matroids can be studied through relation-based rough sets. Finally, it is easy
to prove that the connectivity for a general matroid $M$ and $G(M)$ is
equivalent if $\bigcup\mathcal{C}(M)=U(M)$.  In a word, these results show many potential connections between
relation-based rough sets and matroids.

The remainder of this paper is organized as follows: In Section~\ref{section1}, some basic concepts and properties related to relation-based rough sets and matroids are introduced. In Section~\ref{section2}, a symmetric and transitive relation is introduced from a general matroid and its properties are explored from the viewpoint of matroids. Moreover, through the relation introduced by a general matroid, an undirected graph
is generalized. Specifically, the connection of the graph can be investigated by
the relation-based rough sets. In Section~\ref{section3}, some conditions under which a general matroid is connected are presented by means of relation-based rough sets. Section~\ref{section4} concludes this paper.
\section{Preliminaries}
\label{section1}
In this section, we review some fundamental definitions and
results of relation-based rough sets and matroids.

\subsection{Relation-based rough sets}
Let $U$, the universe of discourse, be a non-empty finite set. We use $P(U)$ to denote the power set of $U$ and $X^{C}$ to denote the complement of $X$ in $U$.

Binary relations play an important role in the theory of rough sets. In this subsection, we present some definitions and properties of relation-based rough sets used in this paper. For detailed
descriptions and proofs of them, please refer to~\cite{Yao98Constructive,Zhu07Generalized,Zhu09RelationshipBetween}.

\begin{definition}(Binary relation~\cite{RajagopalMason92Discrete})
\label{relation}
Let $U$ be a set, $U\times U$ the
product set of $U$ and $U$. Any subset $R$ of $U\times U$ is called a
binary relation on $U$. For any $(x, y)\in U\times U$, if $(x, y)\in R$,
then we say $x$ has relation with $y$, and denote this relationship
as $xRy$.

For any $x\in U$, we call the set $\{y\in U: xRy\}$ the successor
neighborhood of $x$ in $R$ and denote it as $RN(x)$.
\end{definition}

Throughout this paper, a binary relation is simply called a
relation. The
relation and its properties play an important role in studying
relation-based rough sets.
\begin{definition}(\cite{RajagopalMason92Discrete})
A relation $R$ on $U$ is called
\begin{flushleft}
(1) serial, if for all $x\in U$, there is $y\in U$ such that $xRy$;\\
(2) reflexive, if $xRx$ for all $x\in U$;\\
(3) symmetric, if $xRy$ implies $yRx$ for all $x, y\in U$;\\
(4) transitive, if $xRy, yRz$ imply $xRz$ for all $x, y, z\in U$.
\end{flushleft}
\end{definition}

We call the pair $(U, R)$ a relation-based approximation space if $R$ is a binary relation on $U$ without any additional constraints~\cite{Yao96Two}.
\begin{definition}(\cite{Yao98Constructive})
Let $(U, R)$ be a relation-based approximation space. For any $X\subseteq U$, the lower and the upper approximations are defined, respectively, by:
\begin{center}
$\underline{R}(X)=\{x\in U: RN(x)\subseteq X\}$, $\overline{R}(X)=\{x\in U: RN(x)\bigcap X\neq\emptyset\}$.
\end{center}
The operators $\underline{R}, \overline{R}: P(U)\rightarrow P(U)$ are,
respectively, called the lower and upper approximation operators
in $(U, R)$.
\end{definition}

The pair of approximation operators is related to the pair of
modal operators in modal logic~\cite{Yao98Constructive}. Then one can easily obtain
the following properties of approximation operators.
\begin{theorem}(\cite{Yao98Constructive})
Let $R$ be a relation-based approximation space. For subsets $X, Y\subseteq U$:
\begin{flushleft}
(1) $\underline{R}(X^{C})=(\overline{R}(X))^{C}, \overline{R}(X^{C})=(\underline{R}(X))^{C}$;\\
(2) $\underline{R}(U)=U, \overline{R}(\emptyset)=\emptyset$;\\
(3) If $X\subseteq Y$, then $\underline{R}(X)\subseteq\underline{R}(Y), \overline{R}(X)\subseteq\overline{R}(Y)$;\\
(4) $\underline{R}(X\bigcap Y)=\underline{R}(X)\bigcap\underline{R}(Y), \overline{R}(X\bigcup Y)=\overline{R}(X)\bigcup\overline{R}(Y)$.
\end{flushleft}
\end{theorem}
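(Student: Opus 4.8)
The plan is to reduce every assertion to a statement about the successor neighborhoods $RN(x)$ and then verify it by direct set-membership reasoning, since both $\underline{R}(X)$ and $\overline{R}(X)$ are defined pointwise in terms of how $RN(x)$ sits relative to $X$. I would begin with the duality identities in part (1), because once established they let me deduce every upper-approximation claim from its lower-approximation counterpart. For the first identity I would chase memberships: $x\in\underline{R}(X^{C})$ holds exactly when $RN(x)\subseteq X^{C}$, which is equivalent to $RN(x)\bigcap X=\emptyset$, i.e. to $x\notin\overline{R}(X)$; hence $\underline{R}(X^{C})=(\overline{R}(X))^{C}$. Replacing $X$ by $X^{C}$ and using $(X^{C})^{C}=X$ then yields the companion identity $\overline{R}(X^{C})=(\underline{R}(X))^{C}$.

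With duality in hand, the remaining parts each split into a lower-approximation half (proved directly) and an upper-approximation half (obtained by complementation). For (2), $\underline{R}(U)=U$ is immediate because $RN(x)\subseteq U$ for every $x$, and $\overline{R}(\emptyset)=\emptyset$ follows either directly (no neighborhood meets the empty set) or from (1) applied to $X=U$. For (3), monotonicity of $\underline{R}$ is just transitivity of inclusion, $RN(x)\subseteq X\subseteq Y$, while monotonicity of $\overline{R}$ holds because enlarging $X$ to $Y$ can only enlarge the intersection $RN(x)\bigcap X$. For (4), the equivalence $RN(x)\subseteq X\bigcap Y \iff RN(x)\subseteq X \text{ and } RN(x)\subseteq Y$ delivers the lower-approximation equality, and the equivalence $RN(x)\bigcap(X\bigcup Y)\neq\emptyset \iff RN(x)\bigcap X\neq\emptyset \text{ or } RN(x)\bigcap Y\neq\emptyset$ delivers the upper-approximation equality.

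Since each step is a purely definitional equivalence, I expect essentially no genuine obstacle here; the work is bookkeeping rather than ideas. The only points that demand care are that in (4) one must verify true logical \emph{equivalences} (so as to obtain equalities rather than mere one-sided inclusions), and that the distributive identities do not cross types: in general $\underline{R}$ need not distribute over $\bigcup$ nor $\overline{R}$ over $\bigcap$, so the pairing of $\underline{R}$ with $\bigcap$ and of $\overline{R}$ with $\bigcup$ must be respected throughout the argument.
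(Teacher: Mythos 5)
Your proof is correct: every step is the standard definitional unwinding of $\underline{R}$ and $\overline{R}$ in terms of the neighborhoods $RN(x)$, and the duality-first organization (prove (1), then derive each upper-approximation claim from its lower counterpart) is exactly how this result is usually established. Note that the paper itself gives no proof — it states the theorem as a known result cited from Yao's work — so there is nothing to compare against; your argument simply supplies the routine verification the paper omits, and your closing caveat about which operator distributes over which connective is the right thing to flag.
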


\subsection{Matroids}
Matroid theory was established as a generalization, or a
connection, of graph theory and linear algebra. This theory
was used to study abstract relations on a subset, and it uses
both of these areas of mathematics for its motivation, its
basic examples, and its notation. With the rapid development
in recent years, matroid theory has been
applied to a variety of fields such as combinatorial optimization~\cite{Lawler01Combinatorialoptimization} and greedy algorithm design~\cite{Edmonds71Matroids}. One of main characteristic of matroids is that there are many equivalent ways to define them, which
is the basis for its powerful axiomatic system. The following definition presents a widely used axiomatization on matroids.
In this subsection, we present definitions, examples and results
of matroids used in this paper.
\begin{definition}(Matroid~\cite{Lai01Matroid,LiuChen94Matroid})
A matroid is an ordered pair $M=(U, \mathcal{I})$, where $U$ is a finite set, and $\mathcal{I}$  a family of subsets of $U$ with the following three properties:\\
(I1) $\emptyset\in \mathcal{I}$;\\
(I2) If $I\in \mathcal{I}$, and $I'\subseteq I$, then $I'\in\mathcal{I}$;\\
(I3) If $I_{1}$, $I_{2}\in \mathcal{I}$, and $|I_{1}|<|I_{2}|$, then there exists $e\in I_{2}-I_{1}$ such that $I_{1}\bigcup\{e\}\in\mathcal{I}$, where $|I|$ denotes the cardinality of $I$.\\
Any element of $\mathcal{I}$ is called an independent set.
\end{definition}

\begin{example}
\label{e1}
Let $G=(V$, $U)$ be the graph as shown in Fig.\ref{fig1}. Denote $\mathcal{I}=\{I\subseteq U\mid I$ does not contain
a cycle of $G\}$, i.e., $\mathcal{I}=\{\emptyset$, $\{a_{1}\}$, $\{a_{2}\}$, $\{a_{3}\}$, $\{a_{4}\}$, $\{a_{1}$, $a_{2}\}$, $\{a_{1}$, $a_{3}\}$, $\{a_{1}$, $a_{4}\}$, $\{a_{2}$, $a_{3}\}$, $\{a_{2}$, $a_{4}\}$, $\{a_{3}$, $a_{4}\}$, $\{a_{1}$, $a_{2}$, $a_{4}\}$, $\{a_{1}$, $a_{3}$, $a_{4}\}$, $\{a_{2}$, $a_{3}$, $a_{4}\}\}$. Then $M=(U$, $\mathcal{I})$ is a matroid, where $U=\{a_{1}$, $a_{2}$, $a_{3}$, $a_{4}, a_{5}\}$.
\end{example}

\begin{figure}
\begin{center}
  \includegraphics[width=4cm]{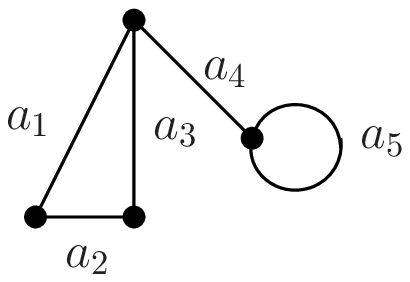}\\
  \caption{A graph}\label{fig1}
\end{center}
\end{figure}
In fact, if a subset is not an independent set, then it is a
dependent set of the matroid. In other words, the dependent
set of a matroid generalizes the cycle of graphs. Based on
the dependent set, we introduce other concepts of a matroid. For this purpose, several denotations are presented in the
following definition.
\begin{definition}(\cite{Lai01Matroid})
Let $\mathcal{A}$ be a family of subsets of $U$. One can denote

$Max(\mathcal{A})=\{X\in \mathcal{A}: \forall Y\in \mathcal{A}$, $X\subseteq Y\Rightarrow X=Y\}$,

$Min(\mathcal{A})=\{X\in \mathcal{A}: \forall Y\in \mathcal{A}$, $Y\subseteq X\Rightarrow X=Y\}$,

$Opp(\mathcal{A})=\{X\subseteq U: X\notin \mathcal{A}\}$,

$Com(\mathcal{A})=\{X\subseteq U: X^{C}\in \mathcal{A}\}$.
\end{definition}

The dependent set of a matroid generalizes the linear dependence in vector spaces and the circle in graph theory.
The circuit of a matroid is a minimal dependent set.

\begin{definition}(Circuit~\cite{Lai01Matroid})
\label{circuit}
Let $M=(U$, $\mathcal{I})$ be a matroid. A minimal dependent set in $M$ is called a
circuit of $M$, and we denote the family of all circuits of $M$ by $\mathcal{C}(M)$, i.e., $\mathcal{C}(M)=Min(Opp(\mathcal{\mathcal{I}}))$.
\end{definition}

In order to illustrate the circuit of a matroid is an extension of the cycle of a graph, the following example is presented.
\begin{example}(Continued from Example~\ref{e1})
\label{e2}
$\mathcal{C}(M)=\{\{a_{5}\}, \{a_{1}, a_{2}, a_{3}\}\}$.
\end{example}

The above example shows that the circuit of a matroid coincides with the cycle of a graph when the matroid is induced by the graph.

A matroid uniquely determines its circuits, and vice versa. The following theorem indicates that a matroid can be defined from the viewpoint of circuits.
\begin{theorem}(Circuit axiom~\cite{Lai01Matroid})
Let $\mathcal{C}$ be a family of subsets of $U$. Then there exists $M=(U, \mathcal{I})$
such that $\mathcal{C}=\mathcal{C}(M)$ if and only if $\mathcal{C}$ satisfies the following three conditions:\\
(C1) $\emptyset\notin\mathcal{C}$;\\
(C2) If $C_{1}, C_{2}\in\mathcal{C}$ and $C_{1}\subseteq C_{2}$, then $C_{1}=C_{2}$;\\
(C3) If $C_{1}, C_{2}\in\mathcal{C}, C_{1}\neq C_{2}$ and $x\in C_{1}\bigcap C_{2}$, then there exists $C_{3}\in\mathcal{C}$ such that $C_{3}\subseteq C_{1}\bigcup C_{2}-\{x\}$.

\end{theorem}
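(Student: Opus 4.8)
The statement is the classical circuit axiomatization, so the plan is to prove the two implications separately, bridging the independent-set description $M=(U,\mathcal{I})$ and the circuit description $\mathcal{C}=\mathcal{C}(M)=Min(Opp(\mathcal{I}))$.

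\textbf{Necessity.} Assume $\mathcal{C}=\mathcal{C}(M)$ for a matroid $M$. Condition (C1) is immediate: by (I1) we have $\emptyset\in\mathcal{I}$, so $\emptyset$ is independent and cannot be a minimal dependent set. Condition (C2) is just unwinding $Min(\cdot)$: two circuits are each minimal in $Opp(\mathcal{I})$, so one cannot properly contain the other, forcing $C_{1}=C_{2}$ whenever $C_{1}\subseteq C_{2}$. The substantive part is (C3), the circuit-elimination property. I would prove it through the rank function $r(X)=\max\{|I|:I\subseteq X,\ I\in\mathcal{I}\}$ and its submodularity. For a circuit $C$ one has $r(C)=|C|-1$ (every proper subset is independent, yet $C$ itself is dependent), while $C_{1}\cap C_{2}$, being a proper subset of the circuit $C_{1}$, is independent, so $r(C_{1}\cap C_{2})=|C_{1}\cap C_{2}|$. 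Submodularity then yields
\begin{equation*}
r(C_{1}\cup C_{2})\le r(C_{1})+r(C_{2})-r(C_{1}\cap C_{2})=(|C_{1}|-1)+(|C_{2}|-1)-|C_{1}\cap C_{2}|=|C_{1}\cup C_{2}|-2 .
\end{equation*}
Since $|(C_{1}\cup C_{2})\setminus\{x\}|=|C_{1}\cup C_{2}|-1>r(C_{1}\cup C_{2})\ge r((C_{1}\cup C_{2})\setminus\{x\})$, the set $(C_{1}\cup C_{2})\setminus\{x\}$ has more elements than its rank and is therefore dependent; any circuit it contains is the desired $C_{3}$.

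\textbf{Sufficiency.} Conversely, assume (C1)--(C3) and define $\mathcal{I}=\{X\subseteq U: C\not\subseteq X \text{ for every } C\in\mathcal{C}\}$, the family of sets containing no member of $\mathcal{C}$. Axioms (I1) and (I2) are routine: $\emptyset$ contains no member of $\mathcal{C}$ by (C1), and a subset of a set containing no member of $\mathcal{C}$ again contains none. The hard part, and the main obstacle of the whole proof, is the augmentation axiom (I3). Here I would argue by contradiction with an extremal choice: among all pairs $X,Y\in\mathcal{I}$ with $|X|<|Y|$ for which no $e\in Y\setminus X$ satisfies $X\cup\{e\}\in\mathcal{I}$, pick one minimizing $|Y\setminus X|$. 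One first checks $X\setminus Y\ne\emptyset$ (otherwise $X\subsetneq Y$ and any added element stays independent by (I2)). Fixing $e\in Y\setminus X$, the dependent set $X\cup\{e\}$ contains a circuit $C\ni e$ with $C\setminus\{e\}\subseteq X$; since $Y\in\mathcal{I}$ forbids $C\subseteq Y$, some $f\in C$ lies in $X\setminus Y$. A first application of (C3) shows that $X'=(X\setminus\{f\})\cup\{e\}$ is again independent, and $|Y\setminus X'|<|Y\setminus X|$, so by minimality $(X',Y)$ is not a counterexample and yields $e'\in Y\setminus X'$ with $X'\cup\{e'\}\in\mathcal{I}$. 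A second application of (C3), to the circuit forced inside $X\cup\{e'\}$ together with $C$, then produces a circuit inside $X'\cup\{e'\}$, contradicting its independence. This establishes (I3), so $M=(U,\mathcal{I})$ is a matroid.

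\textbf{Matching the circuits.} It finally remains to verify $\mathcal{C}(M)=\mathcal{C}$. Each $C\in\mathcal{C}$ is dependent for this $\mathcal{I}$; by (C2) no proper subset of $C$ can contain a member of $\mathcal{C}$, so $C$ is a minimal dependent set, i.e. $C\in\mathcal{C}(M)$. Conversely, any minimal dependent set must contain some $C\in\mathcal{C}$, and minimality together with the dependence of $C$ forces equality. Hence $\mathcal{C}(M)=\mathcal{C}$, completing the equivalence. I expect the delicate bookkeeping in the double use of (C3) inside the augmentation argument to be the only genuinely hard step; everything else is a direct unwinding of the definitions of $Min$, $Opp$, and the independence axioms.
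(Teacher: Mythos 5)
The paper offers no proof of this theorem: it is imported verbatim as a classical result with a citation to \cite{Lai01Matroid}, so there is nothing internal to compare against. Your argument is the standard textbook proof of the circuit axiomatization and is correct in all essentials: necessity of (C3) via $r(C)=|C|-1$ together with submodularity of the rank function, and sufficiency via the extremal counterexample to (I3) with two applications of circuit elimination, followed by the routine identification $\mathcal{C}(M)=Min(Opp(\mathcal{I}))=\mathcal{C}$. Two small points are left implicit but are easily repaired: submodularity of $r$ is itself a (standard) consequence of (I1)--(I3) and deserves at least a citation, and in the second elimination step the circuit forced inside $X\bigcup\{e'\}$ may already avoid $f$, in which case it lies in $X'\bigcup\{e'\}$ directly and (C3) is not needed; neither gap affects the validity of the proof.
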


Based on the circuits of a matroid, one can define the closure of any subset of the ground set as follows. In fact, the closure
captures the essence of spanning space in linear algebra.
\begin{definition}(Closure~\cite{Lai01Matroid})
\label{closure}
Let $M=(U$, $\mathcal{I})$ be a matroid. For all $X\subseteq U, cl_{M}(X)=X\bigcup\{x\in U: \exists C\in \mathcal{C}(M)~s.t., x\in C\subseteq X\bigcup\{x\}\}$ is called the closure of $X$ with respect to $M$, and $cl_{M}$ is called the closure operator.
\end{definition}

In fact, from the viewpoint of closure operator, an axiom of matroids can also be constructed. In other words, a matroid and its
closure operator are determined by each other.
\begin{theorem}(Closure axiom~\cite{Lai01Matroid})
Let $cl: P(U)\rightarrow P(U)$ be an operator. Then there exists $M=(U, \mathcal{I})$ such that $cl=cl_{M}$ if and only if $cl$ satisfies
the following four conditions:\\
(CL1) For all $X\subseteq U, X\subseteq cl(X)$;\\
(CL2) For all $X\subseteq Y\subseteq U, cl(X)\subseteq cl(Y)$;\\
(CL3) For all $X\subseteq U, cl(cl(X))=cl(X)$;\\
(CL4) For all $x, y\in U$ and $X\subseteq U$, if $y\in cl(X\bigcup\{x\})-cl(X)$, then $x\in cl(X\bigcup\{y\})$.
\end{theorem}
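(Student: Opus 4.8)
The plan is to prove both implications of this classical exchange-property characterization, throughout translating between the circuit formula of Definition~\ref{closure}, $cl_M(X)=X\cup\{x\in U:\exists C\in\mathcal{C}(M),\ x\in C\subseteq X\cup\{x\}\}$, and the independence family $\mathcal{I}$. For necessity, assume $cl=cl_M$. Conditions (CL1) and (CL2) are immediate, since extensivity is explicit in the formula and enlarging $X$ only enlarges each witnessing set $X\cup\{x\}$. For (CL4), if $y\in cl_M(X\cup\{x\})-cl_M(X)$ then some circuit $C$ satisfies $y\in C\subseteq X\cup\{x,y\}$, and necessarily $x\in C$, for otherwise $C\subseteq X\cup\{y\}$ would already give $y\in cl_M(X)$; since then $x\in C\subseteq (X\cup\{y\})\cup\{x\}$, the same circuit $C$ witnesses $x\in cl_M(X\cup\{y\})$.

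The delicate step of necessity is (CL3). One inclusion is (CL1); for $cl_M(cl_M(X))\subseteq cl_M(X)$ I would take $z\in cl_M(cl_M(X))-X$ witnessed by a circuit $C$ with $z\in C\subseteq cl_M(X)\cup\{z\}$, chosen so that $|C-(X\cup\{z\})|$ is minimal. If some $w\in C$ lies in $cl_M(X)-X$, I would pick a circuit $C_w$ with $w\in C_w\subseteq X\cup\{w\}$ and eliminate $w$ while retaining $z$, contradicting minimality; hence $C\subseteq X\cup\{z\}$ and $z\in cl_M(X)$. The point requiring care here is the \emph{strong} circuit-elimination property, namely that the distinguished element $z$ survives the elimination, which must be derived from axiom (C3).

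For sufficiency, given $cl$ satisfying (CL1)--(CL4) I would define $\mathcal{I}=\{I\subseteq U:x\notin cl(I-\{x\})\text{ for all }x\in I\}$ and set $M=(U,\mathcal{I})$. Then (I1) is vacuous and (I2) follows from (CL2). The engine of the argument is an augmentation lemma: if $I\in\mathcal{I}$, $e\notin I$ and $I\cup\{e\}\notin\mathcal{I}$, then $e\in cl(I)$. Its proof splits on which element of $I\cup\{e\}$ witnesses dependence, the nontrivial case ($z\in I$ with $z\in cl((I-\{z\})\cup\{e\})\setminus cl(I-\{z\})$) being exactly an application of (CL4). To verify (I3), suppose $I_1,I_2\in\mathcal{I}$ with $|I_1|<|I_2|$ admit no augmentation; the lemma then forces $I_2\subseteq cl(I_1)$, whence $cl(I_2)\subseteq cl(I_1)$ by (CL2) and (CL3). \emph{The principal obstacle} is the subsequent Steinitz-type counting step: repeatedly applying (CL4) to exchange elements of $I_1$ for elements of $I_2$ must yield $|I_2|\le|I_1|$, the contradiction that completes (I3).

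Finally, to confirm $cl_M=cl$, I would fix $X$, take a maximal independent $B\subseteq X$, and show $cl(X)=cl(B)$ using the augmentation lemma together with (CL3). The equivalence $x\in cl_M(X)\Leftrightarrow x\in cl(X)$ then follows in both directions: a circuit witnessing $x\in cl_M(X)$ has $C-\{x\}$ independent with $x\in cl(C-\{x\})\subseteq cl(X)$, while conversely if $x\in cl(B)$ with $x\notin X$ then $B\cup\{x\}$ is dependent and, as $B$ is independent, contains a circuit through $x$ inside $X\cup\{x\}$, giving $x\in cl_M(X)$. Combining the two implications establishes the theorem.
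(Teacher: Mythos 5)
The paper does not prove this theorem: it is quoted verbatim from the matroid-theory literature (\cite{Lai01Matroid}) as a preliminary, so there is no in-paper argument to compare your proposal against. Judged on its own terms, your outline follows the standard textbook route (define $\mathcal{I}=\{I: x\notin cl(I-\{x\})\ \forall x\in I\}$, prove an augmentation lemma via (CL4), verify (I3) by exchange, then identify $cl_M$ with $cl$ through a maximal independent subset), and the easy parts --- (CL1), (CL2), (CL4), (I1), (I2), the augmentation lemma, and the final identification $cl_M=cl$ --- are handled correctly.

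However, the proposal is not a complete proof, because the two steps you yourself flag as delicate are exactly the mathematical core and are left unexecuted. First, your argument for (CL3) invokes the \emph{strong} circuit elimination property (given $e\in C_1\cap C_2$ and $f\in C_1-C_2$, a circuit $C_3$ with $f\in C_3\subseteq (C_1\cup C_2)-\{e\}$). This does follow from (C2)--(C3), but only by a genuinely nontrivial induction on $|C_1\cup C_2|$; the only elimination-type statement available in the paper (the lemma preceding Proposition~\ref{p1}) does not delete the common element and so does not suffice. Second, and more seriously, the verification of (I3) stops at ``the principal obstacle is the Steinitz-type counting step.'' Having shown $I_2\subseteq cl(I_1)$, you still must prove that an independent set contained in $cl(I_1)$ has at most $|I_1|$ elements, which requires a careful induction: repeatedly use (CL4) to replace some $y\in I_1-I_2$ by some $x\in I_2-I_1$ while showing the closure does not shrink and the new set remains of the same size, terminating when $I_2\subseteq cl(J)$ for some $J\subseteq I_2$ with $|J|=|I_1|<|I_2|$, which contradicts independence of $I_2$. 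Naming this step is not the same as carrying it out; as written, the sufficiency direction --- the harder half of the theorem --- is incomplete.
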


\begin{definition}(Rank function~\cite{Lai01Matroid})
\label{rank}
Let $M=(U$, $\mathcal{I})$ be a matroid. We can define the rank function of $M$ as follows: for all $X\in P(U)$
\begin{center}
$r_{M}(X)=max\{|I|: I\subseteq X$, $I\in\mathcal{I}\}$.
\end{center}
We call $r_{M}(X)$ the rank of $X$ in $M$.
\end{definition}

Based on the rank function of a matroid, one can present the an equivalent formulation of closure operator, which reflects the dependency between
a set and elements.

\begin{theorem}(\cite{Lai01Matroid})
\label{closure2}
Let $M=(U$, $\mathcal{I})$ be a matroid. Then
\begin{center}
$cl_{M}(X)=\{e\in U: r_{M}(X\bigcup\{e\})=r_{M}(X)\}$ for all $X\in P(U)$.
\end{center}
\end{theorem}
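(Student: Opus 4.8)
The plan is to prove the set equality $cl_M(X) = \{e \in U : r_M(X \cup \{e\}) = r_M(X)\}$ by establishing both inclusions; write $cl'(X)$ for the right-hand side. Throughout I will use three standard consequences of the axioms (I1)--(I3). First, any maximal independent subset of a set $Y$ --- call it a \emph{basis} of $Y$ --- has cardinality exactly $r_M(Y)$, and any independent subset of $Y$ extends to such a basis; both follow from the augmentation axiom (I3). Second, adjoining a single element raises the rank by at most one: any independent $I \subseteq Y \cup \{e\}$ yields an independent set $I - \{e\} \subseteq Y$, so $|I| \leq r_M(Y) + 1$, while monotonicity of $r_M$ gives $r_M(Y) \leq r_M(Y \cup \{e\})$. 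Third, by (I2) every superset of a dependent set is dependent, and since $U$ is finite, every dependent set contains a circuit.

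For the inclusion $cl_M(X) \subseteq cl'(X)$, take $e \in cl_M(X)$. If $e \in X$ then $X \cup \{e\} = X$ and the rank equality is immediate. Otherwise there is a circuit $C$ with $e \in C \subseteq X \cup \{e\}$; since $C$ is a minimal dependent set, $C - \{e\}$ is independent and contained in $X$, so I extend it to a basis $B$ of $X$, whence $B \cup \{e\} \supseteq C$ is dependent. Suppose for contradiction that $r_M(X \cup \{e\}) > r_M(X)$; by the second fact this forces $r_M(X \cup \{e\}) = r_M(X) + 1$, so there is an independent set $J \subseteq X \cup \{e\}$ with $|J| = |B| + 1$. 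Applying (I3) to $B$ and $J$ produces an element $g \in J - B$ with $B \cup \{g\}$ independent; since $g \in X$ would contradict the maximality of $B$ in $X$, we must have $g = e$, making $B \cup \{e\}$ independent and contradicting its dependence. Hence $r_M(X \cup \{e\}) = r_M(X)$, i.e. $e \in cl'(X)$.

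For the reverse inclusion $cl'(X) \subseteq cl_M(X)$, take $e$ with $r_M(X \cup \{e\}) = r_M(X)$; the case $e \in X$ is trivial since $X \subseteq cl_M(X)$. If $e \notin X$, let $B$ be a basis of $X$; then $|B| = r_M(X) = r_M(X \cup \{e\})$, so $B$ is also a maximal independent subset of $X \cup \{e\}$, which forces $B \cup \{e\}$ to be dependent. By the third fact $B \cup \{e\}$ contains a circuit $C \subseteq B \cup \{e\} \subseteq X \cup \{e\}$, and since $B$ is independent we have $C \not\subseteq B$, so $e \in C$. Thus $e \in C \subseteq X \cup \{e\}$ witnesses $e \in cl_M(X)$.

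The main obstacle is the forward inclusion: the content lies not in the bookkeeping but in the realization that one should extend $C - \{e\}$ to a \emph{basis} of $X$ and then play the two independent sets $B$ and $J$ off against each other through (I3), since the circuit description and the rank description communicate only via the augmentation property. The reverse inclusion is comparatively routine once the basis-cardinality fact is in hand.
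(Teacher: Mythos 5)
Your proof is correct. The paper itself offers no proof of this statement --- it is quoted from the matroid theory literature (the reference [Lai, \emph{Matroid theory}]) --- so there is nothing to compare against; your argument is the standard one, and both directions are handled properly: the forward inclusion correctly extends $C-\{e\}$ to a basis $B$ of $X$ and uses (I3) to force $B\cup\{e\}$ to be simultaneously independent and dependent, and the reverse inclusion correctly extracts a circuit through $e$ from the dependent set $B\cup\{e\}$.
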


\section{Binary relations, matroids and graphs}
\label{section2}

A matroid $M(U, \mathcal{I})$ is said to be a free matroid if $\mathcal{I}=P(U)$, and otherwise we say it is a general matroid.
In this section, a new relation is induced by a general matroid and its properties are investigated from the viewpoint of matroid theory.
\begin{definition}
\label{relation-matroid}
Let $M(U, \mathcal{I})$ be a general matroid. We can define
a relation $R(M)$ as follows: for all $x, y\in U$,
\begin{center}
$xR(M)y\Leftrightarrow \exists C\in \mathcal{C}(M)\rightarrow\{x, y\}\subseteq C$.
\end{center}
We say that the relation $R(M)$ is induced from $M$.
\end{definition}

According to the above definition, it is clearly that $R(M)N(x)=\bigcup\{C\in\mathcal{C}(M): x\in C\}$ for all $x\in U$.

\begin{example}
\label{e4}
Let $U=\{a_{1}, a_{2}, a_{3}, a_{4}, a_{5}, a_{6}, a_{7}\}$ where $a_{1}=[1~0~0]^{T}, a_{2}=[0~1~0]^{T}, \\a_{3}=[0~0~1]^{T}, a_{4}=[1~1~0]^{T}, a_{5}=[0~1~1]^{T}, a_{6}=[0~1~1]^{T}, a_{7}=[0~0~0]^{T}$. Denote $\mathcal{I}=\{X\subseteq U: X$ are linearly independent $\}$. Then $\mathcal{I}$
consists of all subsets of $U-\{a_{7}\}$ with at most three elements except for $\{a_{1}, a_{2}, a_{4}\}, \{a_{2}, a_{3}, a_{5}\}, \{a_{2}, a_{3}, a_{6}\}$, and any subset containing $\{a_{5}, a_{6}\}$. The pair $M=(U, \mathcal{I})$ is a particular example of a matroid. It is easy to know
$\mathcal{C}(M)=\{\{a_{7}\}, \{a_{5}, a_{6}\}, \{a_{1}, a_{2}, a_{4}\}, \{a_{2}, a_{3}, a_{5}\}, \{a_{2}, a_{3}, a_{6}\}, \{a_{1}, a_{3}, a_{4},\\ a_{5}\}, \{a_{1}, a_{3}, a_{4}, a_{6}\}\}$ by Definition~\ref{circuit}. Then the relation induced from $M$ is $R(M)=\{a_{1}, a_{2}, a_{3}, a_{4}, a_{5}, a_{6}\}\times\{a_{1}, a_{2}, a_{3}, a_{4}, a_{5}, a_{6}\}\bigcup\{(a_{7}, a_{7})\}$.

And we have
$R(M)N(a_{1})=\bigcup\{\{a_{1}, a_{2}, a_{4}\}, \{a_{1}, a_{3}, a_{4}, a_{5}\}, \{a_{1}, a_{3}, a_{4}, a_{6}\}\}=\{a_{1}, a_{2}, a_{3}, a_{4}, a_{5}, a_{6}\}$;
$R(M)N(a_{2})=\bigcup\{\{a_{1}, a_{2}, a_{4}\}, \{a_{2}, a_{3}, a_{5}\}, \{a_{2}, a_{3}, a_{6}\}\}=\{a_{1}, a_{2}, a_{3}, a_{4}, a_{5}, a_{6}\}$;
$R(M)N(a_{3})=\bigcup\{\{a_{2}, a_{3}, a_{5}\}, \{a_{2}, a_{3}, a_{6}\}, \{a_{1}, a_{3}, a_{4}, a_{5}\},\\ \{a_{1}, a_{3}, a_{4}, a_{6}\}\}=\{a_{1}, a_{2}, a_{3}, a_{4}, a_{5}, a_{6}\}$; $R(M)N(a_{4})=\bigcup\{\{a_{1}, a_{2}, a_{4}\}, \{a_{1}, a_{3}, \\a_{4}, a_{5}\}, \{a_{1}, a_{3}, a_{4}, a_{6}\}\}=\{a_{1}, a_{2}, a_{3}, a_{4}, a_{5}, a_{6}\}$; $R(M)N(a_{5})=\bigcup\{\{a_{5}, a_{6}\}, \{a_{2},\\ a_{3}, a_{5}\},\{a_{1}, a_{3}, a_{4}, a_{5}\}\}=\{a_{1}, a_{2}, a_{3}, a_{4}, a_{5}, a_{6}\}$; $R(M)N(a_{6})=\bigcup\{\{a_{5}, a_{6}\}, \{a_{2},\\ a_{3}, a_{6}\},\{a_{1}, a_{3}, a_{4}, a_{6}\}\}=\{a_{1}, a_{2}, a_{3}, a_{4}, a_{5}, a_{6}\}$; $R(M)N(a_{7})=\bigcup\{\{a_{7}\}\}=\{a_{7}\}$.
\end{example}

It is clearly $R(M)$ is a symmetric and transitive relation. In order to prove this result, the following lemma is presented.
\begin{lemma}(~\cite{Lai01Matroid})
Let $M(U, \mathcal{I})$ be a matroid. If $C_{1}, C_{2}\in\mathcal{C}(M), e_{1}\in C_{1}-C_{2}, e_{2}\in C_{2}-C_{1}$ and $C_{1}\bigcap C_{2}\neq \emptyset$, then there exists $C_{3}\in\mathcal{C}(M)$ such that $e_{1}, e_{2}\in C_{3}\subseteq C_{1}\bigcup C_{2}$.
\end{lemma}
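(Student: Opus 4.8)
The statement is the \emph{strong circuit elimination} property, a genuine strengthening of axiom (C3): it asserts not merely that the union of two intersecting circuits contains a further circuit, but that a single circuit inside $C_1\cup C_2$ can be made to pass through the two prescribed elements $e_1$ and $e_2$ simultaneously. My plan is to reduce everything to the restriction of $M$ to $Z:=C_1\cup C_2$ and to produce the desired circuit as a \emph{fundamental circuit} controlled by axiom (C3).

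First I would record two closure facts inside $Z$. Write $X:=Z\setminus\{e_1,e_2\}$. Since $C_1$ is a circuit and $e_2\notin C_1$, we have $C_1\setminus\{e_1\}\subseteq X$, and because a circuit minus one of its members spans that member (its proper subsets are independent, so $r_M(C_1\setminus\{e_1\})=|C_1|-1=r_M(C_1)$), Theorem~\ref{closure2} gives $e_1\in cl_M(X)$; symmetrically $e_2\in cl_M(X)$. Hence $r_M(X)=r_M(Z)$, so any basis $B$ of $X$ is also a basis of $Z$ with $e_1,e_2\in cl_M(B)$. Consequently the fundamental circuits $C(e_1,B)\ni e_1$ and $C(e_2,B)\ni e_2$, the unique circuits inside $B\cup\{e_1\}$ and $B\cup\{e_2\}$ respectively, are well defined and contained in $Z$.

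The engine of the argument is the following claim: if $B$ can be chosen so that $C(e_1,B)$ and $C(e_2,B)$ share a common element $b$, then we are done. Indeed $b\in B$, and the two circuits are distinct because $e_1\in C(e_1,B)$ while $e_1\notin B\cup\{e_2\}\supseteq C(e_2,B)$, so (C3) yields a circuit $C_3\subseteq\bigl(C(e_1,B)\cup C(e_2,B)\bigr)\setminus\{b\}$. Now $\bigl(C(e_1,B)\cup C(e_2,B)\bigr)\setminus\{e_1,e_2\}\subseteq B$ is independent, so every circuit inside $C(e_1,B)\cup C(e_2,B)$ must meet $\{e_1,e_2\}$; and a circuit containing $e_1$ but not $e_2$ would lie in $B\cup\{e_1\}$ and hence equal $C(e_1,B)$, which contains $b$ -- impossible, since $C_3$ avoids $b$. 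The symmetric argument excludes $e_2$ alone, so $C_3$ contains both $e_1$ and $e_2$ and lies in $Z=C_1\cup C_2$, exactly as required.

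It remains to secure a basis $B$ whose two fundamental circuits overlap, and this is where the hypothesis $C_1\cap C_2\neq\emptyset$ is indispensable and where the main difficulty lies. In the favourable case where $X$ is independent one simply takes $B=X$; then $C(e_1,B)=C_1$ and $C(e_2,B)=C_2$, which overlap in any $x\in C_1\cap C_2$, so the claim applies at once. In general $X$ is dependent, and I would proceed by induction on the nullity $|X|-r_M(X)$: extend $C_1\setminus\{e_1\}$ to a basis $B$, forcing $C(e_1,B)=C_1$ and hence retaining $x\in C_1\cap C_2$, and then show that the fundamental circuit of $e_2$ can be routed through some element of $C_1$. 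The crux -- and the step I expect to be the real obstacle -- is precisely this guarantee that the two fundamental circuits cannot be driven apart: intuitively the common element $x$ bridges the dependency of $e_1$ to that of $e_2$, but converting this intuition into a clean choice of $B$ (or, alternatively, into a clean inductive descent on $|C_1\cup C_2|$ that uses (C3) while retaining both target elements) is the delicate part of the proof.
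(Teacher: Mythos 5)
The paper does not actually prove this lemma---it is quoted verbatim from the cited matroid theory text---so there is no in-paper argument to measure you against; I can only assess your proposal on its own terms. Your reduction is sound as far as it goes: the closure computation showing $e_1,e_2\in cl_M(X)$ for $X=(C_1\cup C_2)-\{e_1,e_2\}$ is correct, and the ``engine'' is a genuinely good observation---if the two fundamental circuits $C(e_1,B)$ and $C(e_2,B)$ share an element $b$, the circuit produced by (C3) inside their union minus $b$ is forced to contain both $e_1$ and $e_2$, because every circuit in that union must meet $\{e_1,e_2\}$ (the remainder lies in the independent set $B$) and a circuit meeting only one of them would coincide with the corresponding fundamental circuit, which contains the deleted element $b$. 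But as submitted the proof is incomplete: everything hinges on producing a basis $B$ whose two fundamental circuits intersect, and you explicitly leave that step open (``the delicate part of the proof''). Since this is exactly where the hypothesis $C_1\cap C_2\neq\emptyset$ must enter, the proposal as written does not establish the lemma.

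The irony is that the basis you name in passing already does the job, and the missing verification is three lines. Extend the independent set $C_1-\{e_1\}$ to a basis $B$ of $X$; then $C_1\subseteq B\cup\{e_1\}$ forces $C(e_1,B)=C_1$, and $C_2-\{e_2\}\subseteq X$ guarantees $e_2\in cl_M(B)$, so $C(e_2,B)$ exists. Now suppose $C(e_2,B)\cap C_1=\emptyset$. Since $B\subseteq X$, every element of $B$ outside $C_1$ lies in $C_2-C_1-\{e_2\}$, hence $C(e_2,B)\subseteq(B\cup\{e_2\})-C_1\subseteq C_2-C_1$. But $C_1\cap C_2\neq\emptyset$ makes $C_2-C_1$ a \emph{proper} subset of the circuit $C_2$, hence independent, and it cannot contain the circuit $C(e_2,B)$---a contradiction. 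Thus $C(e_2,B)$ meets $C(e_1,B)=C_1$ in some $b$ (necessarily $b\in B$, since $e_2\notin C_1$), your engine applies, and the lemma follows. With this paragraph inserted your argument is complete and self-contained; no induction on nullity or on $|C_1\cup C_2|$ is needed.
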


The above lemma shows that the circuits of a matroid satisfy transitivity. Based on this lemma, the following proposition
is presented and proved.
\begin{proposition}
\label{p1}
Let $M(U, \mathcal{I})$ be a general matroid. Then $R(M)$ is a symmetric and transitive relation on $U$.
\end{proposition}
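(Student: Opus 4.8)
The plan is to verify the two defining properties separately, treating symmetry as essentially immediate and concentrating the real work on transitivity. Recall that $xR(M)y$ means that some circuit $C\in\mathcal{C}(M)$ satisfies $\{x,y\}\subseteq C$, so both properties amount to statements about circuits containing prescribed elements.

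For symmetry I would simply observe that $\{x,y\}=\{y,x\}$ as sets, so any circuit witnessing $xR(M)y$ equally witnesses $yR(M)x$; no matroid machinery is needed here.

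For transitivity, suppose $xR(M)y$ and $yR(M)z$, and fix circuits $C_1,C_2\in\mathcal{C}(M)$ with $\{x,y\}\subseteq C_1$ and $\{y,z\}\subseteq C_2$. The goal is to produce a single circuit containing both $x$ and $z$. I would first dispose of the easy cases. If $x=z$, then $x\in C_1$ already gives $\{x,z\}=\{x\}\subseteq C_1$. If $z\in C_1$ (or symmetrically $x\in C_2$), then $\{x,z\}\subseteq C_1$ (respectively $\subseteq C_2$) directly. The remaining case is $x\neq z$ together with $x\notin C_2$ and $z\notin C_1$, so that $x\in C_1-C_2$ and $z\in C_2-C_1$; in particular this forces $C_1\neq C_2$. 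Since $y\in C_1\cap C_2$, we also have $C_1\cap C_2\neq\emptyset$.

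This residual case is exactly the configuration handled by the preceding Lemma: applying it with $e_1=x$ and $e_2=z$ yields a circuit $C_3\in\mathcal{C}(M)$ with $x,z\in C_3\subseteq C_1\cup C_2$, i.e. $\{x,z\}\subseteq C_3$, hence $xR(M)z$. The only genuine difficulty is concentrated in this last step, and it has been outsourced to the Lemma (the \emph{strong} circuit-elimination property); everything else is a routine case split whose sole purpose is to guarantee that the Lemma's hypotheses $e_1\in C_1-C_2$, $e_2\in C_2-C_1$ and $C_1\cap C_2\neq\emptyset$ hold verbatim. Thus the main obstacle is really just recognizing that the two witnessing circuits share the common element $y$, and organizing the case analysis so that whenever no existing circuit already contains $\{x,z\}$, the Lemma applies.
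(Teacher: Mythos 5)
Your proof is correct and follows essentially the same route as the paper: symmetry from $\{x,y\}=\{y,x\}$, and transitivity by a case split that either finds $\{x,z\}$ inside $C_1$ or $C_2$ directly, or else invokes the preceding strong circuit-elimination lemma with $e_1=x$, $e_2=z$. If anything, your version is slightly more careful than the paper's, since you explicitly verify the lemma's hypotheses ($x\in C_1-C_2$, $z\in C_2-C_1$, $y\in C_1\cap C_2\neq\emptyset$), which the paper's final case leaves implicit.
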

\begin{proof}
(1) symmetric. For all $x, y\in U$, if $(x, y)\in R(M)$, then there exists $C\in \mathcal{C}(M)$ such that $\{x, y\}\subseteq C$. Since
$\{x, y\}=\{y, x\}$, then $(y, x)\in R(M)$ and $R(M)$ is a symmetric relation on $U$. (2) transitive. For all $x, y, z\in U$, if $(x, y)\in R(M)$
and $(y, z)\in R(M)$, then there exist $C_{1}, C_{2}\in\mathcal{C}(M)$ such that $\{x, y\}\subseteq C_{1}$ and $\{y, z\}\subseteq C_{2}$. If $z\in C_{1}$ or $x\in C_{2}$, then $(x, z)\in R(M)$, i.e., $R(M)$ is transitive. If $z\notin C_{1}$ and $x\notin C_{2}$, then there exists $C_{3}\in C(M)$
such that $x, z\in C_{3}$, i.e., $R(M)$ is transitive.

To sum up, we have already finished the proof of this proposition.
\end{proof}

\begin{example}(Continued from Example~\ref{e1})
\label{e5}
The relation induced from $M$ in Example~\ref{e1} is $R(M)=\{(a_{1}, a_{1}), (a_{1}, a_{2}), (a_{1}, a_{3}), (a_{2}, a_{1}), (a_{2}, a_{2}), (a_{2}, a_{3}), (a_{3}, a_{1}), (a_{3}, a_{2}), (a_{3},\\ a_{3}), (a_{5}, a_{5})\}$. It is clearly that $R(M)$ is a symmetric and transitive relation on $U=\{a_{1}, a_{2}, a_{3}, a_{4}, a_{5}\}$.
\end{example}

\begin{proposition}
\label{p2}
Let $M(U, \mathcal{I})$ be a general matroid. Then
\begin{center}
$\overline{R(M)}(X)=\bigcup\{C\in\mathcal{C}(M): C\bigcap X\neq\emptyset\}$ for all $X\subseteq U$.
\end{center}
\end{proposition}

\begin{proof}
Since $\overline{R(M)}(X)=\{x\in U: R(M)N(x)\bigcap X\neq\emptyset\}$ and $R(M)N(x)=\bigcup\{C\in\mathcal{C}(M): x\in C\}$ for all $x\in U$, then
$\overline{R(M)}(X)=\bigcup\{C\in\mathcal{C}(M): C\bigcap X\neq\emptyset\}$.
\end{proof}

According to the above proposition, the widely used relation-based upper approximation operators are represented by
the circuits of matroids.
\begin{example}(Continued from Example~\ref{e1})
\label{e6}
We have
$R(M)N(a_{1})=R(M)N(a_{2})=R(M)N(a_{3})=\{a_{1}, a_{2}, a_{3}\}$;
$R(M)N(a_{4})=\emptyset$; $R(M)N(a_{5})=\{a_{5}\}$.
Let $X=\{a_{1}, a_{4}\}$ and $Y=\{a_{2}, a_{4}, a_{5}\}$. Then $\overline{R(M)}(X)=\{a_{1}, a_{2}, a_{3}\}$.
$\overline{R(M)}(Y)=\{a_{1}, a_{2},\\ a_{3}, a_{5}\}$.
\end{example}

\begin{proposition}
\label{p3}
Let $M(U, \mathcal{I})$ be a general matroid. $R(M)$ is reflexive if and only if $\bigcup \mathcal{C}(M)=U$.
\end{proposition}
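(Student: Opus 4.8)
The plan is to prove the biconditional $R(M)$ is reflexive $\iff$ $\bigcup\mathcal{C}(M)=U$ by unwinding the definition of reflexivity in terms of the successor neighborhood $R(M)N(x)$, and then using the key computational fact already noted after Definition~\ref{relation-matroid}, namely $R(M)N(x)=\bigcup\{C\in\mathcal{C}(M): x\in C\}$ for all $x\in U$. Recall that $R(M)$ is reflexive means $xR(M)x$ for every $x\in U$, which by Definition~\ref{relation-matroid} is equivalent to saying that for every $x$ there exists a circuit $C\in\mathcal{C}(M)$ with $\{x\}=\{x,x\}\subseteq C$, i.e. $x\in C$. So the whole statement reduces to the observation that every element lies in some circuit precisely when the union of all circuits is the entire ground set.

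For the forward direction, I would assume $R(M)$ is reflexive and prove $\bigcup\mathcal{C}(M)=U$. The inclusion $\bigcup\mathcal{C}(M)\subseteq U$ is immediate since each circuit is a subset of $U$. For the reverse inclusion, take any $x\in U$; reflexivity gives $xR(M)x$, hence there is a circuit $C$ with $x\in C$, so $x\in\bigcup\mathcal{C}(M)$. This shows $U\subseteq\bigcup\mathcal{C}(M)$, and combining the two inclusions yields equality.

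For the backward direction, I would assume $\bigcup\mathcal{C}(M)=U$ and prove reflexivity. For any $x\in U$, the hypothesis guarantees $x\in\bigcup\mathcal{C}(M)$, so there exists $C\in\mathcal{C}(M)$ with $x\in C$, and therefore $\{x,x\}=\{x\}\subseteq C$. By Definition~\ref{relation-matroid} this means $xR(M)x$, and since $x$ was arbitrary, $R(M)$ is reflexive on $U$.

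I do not expect a genuine obstacle here: the proposition is essentially a restatement of the definition of $R(M)$ on the diagonal, and both directions follow by direct element-chasing. The only mild subtlety worth stating explicitly is that $\{x,y\}\subseteq C$ with $x=y$ collapses to the single-element condition $x\in C$, so that a circuit containing $x$ alone suffices to witness $xR(M)x$; making this identification cleanly is the one point I would be careful to phrase precisely rather than a step requiring any real work.
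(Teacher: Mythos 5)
Your proof is correct and follows essentially the same route as the paper's: both directions reduce reflexivity to the existence, for each $x\in U$, of a circuit containing $x$, via the observation that $\{x,x\}\subseteq C$ collapses to $x\in C$. The only difference is that you spell out the two set inclusions for $\bigcup\mathcal{C}(M)=U$ explicitly, which the paper leaves implicit.
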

\begin{proof}
$(\Rightarrow)$: If $R(M)$ is a reflexive relation on $U$, then there exists $C\in \mathcal{C}(M)$ such that $\{x\}\subseteq C$ for all $x\in U$. Therefore $\bigcup \mathcal{C}(M)=U$.

$(\Leftarrow)$: If $\bigcup \mathcal{C}(M)=U$, then there exists $C\in \mathcal{C}(M)$ such that $y\in C$
for all $y\in U$, i.e., $(y, y)\in R(M)$.

This completes the proof.
\end{proof}

The above proposition shows that the sufficient and  necessary condition of $R(M)$ is a reflexive relation. In fact, $R(M)$ is serial and $R$
is reflexive are equivalent.

\begin{proposition}
\label{p4}
Let $M(U, \mathcal{I})$ be a general matroid. $R(M)$ is reflexive if and only if $R(M)$ is serial.
\end{proposition}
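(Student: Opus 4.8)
The plan is to prove the two implications separately, with the forward direction being immediate and the reverse direction resting on the symmetry and transitivity of $R(M)$ already established in Proposition~\ref{p1}.

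For the direction ``reflexive $\Rightarrow$ serial,'' I would simply observe that if $R(M)$ is reflexive then $xR(M)x$ holds for every $x\in U$, so each $x$ is related to at least one element of $U$, namely itself. Hence $R(M)$ is serial directly by the definition of seriality, and no further work is required here.

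For the harder direction ``serial $\Rightarrow$ reflexive,'' the key idea is to exploit that $R(M)$ is both symmetric and transitive. I would fix an arbitrary $x\in U$. By seriality there exists $y\in U$ with $xR(M)y$. Applying symmetry to $xR(M)y$ yields $yR(M)x$. Then transitivity, applied to the pair $xR(M)y$ and $yR(M)x$, gives $xR(M)x$. Since $x$ was arbitrary, this shows $R(M)$ is reflexive, completing the argument.

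The step I expect to be the crux is recognizing that seriality by itself is too weak to force reflexivity in a general relation, and that it is precisely the combination with the symmetric and transitive properties of $R(M)$ that closes the loop from the witness $y$ back to $x$. Beyond this observation the argument is routine. As a remark, one could alternatively route the reverse implication through Proposition~\ref{p3}: seriality says $R(M)N(x)\neq\emptyset$ for every $x$, which by the identity $R(M)N(x)=\bigcup\{C\in\mathcal{C}(M):x\in C\}$ forces each $x$ to lie in some circuit, hence $\bigcup\mathcal{C}(M)=U$ and reflexivity follows from Proposition~\ref{p3}. The direct symmetry--transitivity argument is, however, shorter and self-contained, so I would present that one as the main proof.
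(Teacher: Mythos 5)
Your proof is correct, but your main argument takes a different route from the paper's. The paper proves the reverse implication by passing through the circuit structure of the matroid: seriality gives $R(M)N(x)\neq\emptyset$ for every $x$, which via the identity $R(M)N(x)=\bigcup\{C\in\mathcal{C}(M):x\in C\}$ yields $\bigcup\mathcal{C}(M)=U$, and then reflexivity follows from Proposition~\ref{p3}. This is exactly the alternative you sketch in your closing remark. Your primary argument instead works purely at the level of relational algebra: since $R(M)$ is symmetric and transitive by Proposition~\ref{p1}, a serial witness $y$ for $x$ gives $xR(M)y$, $yR(M)x$, and hence $xR(M)x$ by transitivity. This is more general --- it shows that \emph{any} symmetric and transitive serial relation is reflexive, with no reference to matroids or circuits --- and it is self-contained modulo Proposition~\ref{p1}. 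The paper's route has the modest advantage of reinforcing the circuit-covering characterization $\bigcup\mathcal{C}(M)=U$ that is used repeatedly in later results, but both proofs are complete and correct.
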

\begin{proof}
$(\Rightarrow)$: It is straightforward.

$(\Leftarrow)$: If $R(M)$ is serial, then $R(M)N(x)\neq\emptyset$ for all $x\in U$. Therefore $\bigcup \mathcal{C}(M)=U$.
It is easy to prove $R(M)$ is reflexive by Proposition~\ref{p3}.
\end{proof}

\begin{example}
\label{e7}
Let $G=(V$, $U)$ be the graph as shown in Fig.\ref{fig2}. Denote $\mathcal{C}=\{C\subseteq U\mid C$ does a cycle of $G\}$, i.e., $\mathcal{C}=\{\{a_{1}\}, \{a_{2}\}, \{a_{5}, a_{8}\}, \{a_{3}, a_{4}, a_{5}\}, \{a_{3}, a_{4}, a_{8}\}, \{a_{5}, a_{6},$\\$ a_{7}\}, \{a_{6}, a_{7}, a_{8}\},\{a_{3}, a_{4}, a_{6}, a_{7}\}\}$.
Then there exists a matroid $M=(U$, $\mathcal{I})$ such that $\mathcal{C}=\mathcal{C}(M)$, where $U=\{a_{1}, a_{2}, a_{3}, a_{4}, a_{5}, a_{6}, a_{7}, a_{8}\}$. Clearly, $R(M)=\{(a_{1}, a_{1}), (a_{2},\\ a_{2}), (a_{3}, a_{3}), (a_{3}, a_{4}), (a_{3}, a_{5}), (a_{3}, a_{6}), (a_{3}, a_{7}), (a_{3}, a_{8}), (a_{4}, a_{3}), (a_{4}, a_{4}), (a_{4}, a_{5}),\\ (a_{4}, a_{6}), (a_{4}, a_{7}), (a_{4}, a_{8}), (a_{5}, a_{3}), (a_{5}, a_{4}), (a_{5}, a_{5}), (a_{5}, a_{6}), (a_{5}, a_{7}), (a_{5}, a_{8}), (a_{6},\\ a_{3}), (a_{6}, a_{4}), (a_{6}, a_{5}), (a_{6}, a_{6}), (a_{6}, a_{7}), (a_{6}, a_{8}), (a_{7}, a_{3}), (a_{7}, a_{4}), (a_{7}, a_{5}), (a_{7}, a_{6}),\\ (a_{7}, a_{7}), (a_{7}, a_{8}), (a_{8}, a_{3}), (a_{8}, a_{4}), (a_{8}, a_{5}), (a_{8}, a_{6}), (a_{8}, a_{7}), (a_{8}, a_{8})\}$. It is easy to know $R(M)$ is an equivalence relation.

\begin{figure}
\begin{center}
  \includegraphics[width=3cm]{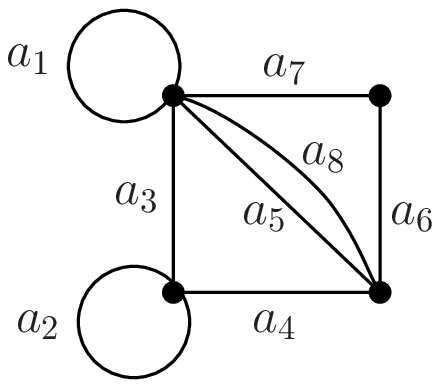}\\
  \caption{A graph}\label{fig2}
\end{center}
\end{figure}
\end{example}
\begin{proposition}
\label{p5}
Let $M(U, \mathcal{I})$ be a general matroid. For all $X\subseteq U$, $\overline{R(M)}(X)=cl_{M}(X)$ if and only if $\bigcup\mathcal{C}(M)=U$.
\end{proposition}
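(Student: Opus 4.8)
The plan is to split the biconditional into its two implications, using throughout the circuit description of the upper approximation from Proposition~\ref{p2}, namely $\overline{R(M)}(X)=\bigcup\{C\in\mathcal{C}(M): C\cap X\neq\emptyset\}$, the defining formula for $cl_{M}$ from Definition~\ref{closure}, and the reflexivity criterion of Proposition~\ref{p3}.

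For the forward implication I would show that the assumed equality already forces $\overline{R(M)}$ to be extensive, and then invoke Proposition~\ref{p3}. Concretely, specialising the hypothesis to $X=\{x\}$ and using property (CL1) of the closure operator gives $x\in cl_{M}(\{x\})=\overline{R(M)}(\{x\})$, which unwinds to $R(M)N(x)\cap\{x\}\neq\emptyset$, i.e.\ $(x,x)\in R(M)$. As $x$ is arbitrary, $R(M)$ is reflexive, whence $\bigcup\mathcal{C}(M)=U$ by Proposition~\ref{p3}. This direction should be short and purely formal.

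For the converse I would assume $\bigcup\mathcal{C}(M)=U$ and establish the two inclusions separately. For $cl_{M}(X)\subseteq\overline{R(M)}(X)$, an element $x\in cl_{M}(X)$ is either in $X$, in which case $\bigcup\mathcal{C}(M)=U$ supplies a circuit $C\ni x$ with $x\in C\cap X$ and so $x\in\overline{R(M)}(X)$, or it is witnessed by a circuit $C$ with $x\in C\subseteq X\cup\{x\}$; when $|C|\geq 2$ we have $\emptyset\neq C-\{x\}\subseteq X$, so $C\cap X\neq\emptyset$ and again $x\in C\subseteq\overline{R(M)}(X)$.

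The hard part will be the reverse inclusion $\overline{R(M)}(X)\subseteq cl_{M}(X)$. Starting from $x\in\overline{R(M)}(X)$ one has a circuit $C$ with $x\in C$ and $C\cap X\neq\emptyset$, and one must manufacture a circuit $C'$ with $x\in C'\subseteq X\cup\{x\}$ before concluding $x\in cl_{M}(X)$. Since such a $C$ may carry ``stray'' elements outside $X\cup\{x\}$, the plan is to remove them one by one via the circuit elimination axiom (C3) or the elimination lemma used in Proposition~\ref{p1}, each time pairing a stray element with a circuit furnished by $\bigcup\mathcal{C}(M)=U$. I expect this to be the crux of the argument, and also the point where the one-element circuits (loops) demand the most care: a loop $x$ lies in $cl_{M}(X)$ for every $X$ yet its only neighbourhood is $\{x\}$, so the treatment of loops is exactly where the two inclusions are most delicate and must be checked against the hypothesis $\bigcup\mathcal{C}(M)=U$.
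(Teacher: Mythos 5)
Your forward implication is correct and is essentially the paper's own argument: specialising to $X=\{x\}$ and using (CL1) forces $(x,x)\in R(M)$, and Proposition~\ref{p3} finishes. The genuine problem is the converse, and it sits exactly at the two places you yourself flag as delicate. The inclusion $\overline{R(M)}(X)\subseteq cl_{M}(X)$ that you propose to obtain by iterated circuit elimination cannot be established, because it is false under the stated hypothesis: take the matroid of Example~\ref{e10}, where $U=\{a_{1},a_{2},a_{3},a_{4}\}$, $\mathcal{C}(M)=\{\{a_{1},a_{2}\},\{a_{1},a_{3},a_{4}\},\{a_{2},a_{3},a_{4}\}\}$ and $\bigcup\mathcal{C}(M)=U$. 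For $X=\{a_{3}\}$, Proposition~\ref{p2} gives $\overline{R(M)}(X)=U$, but no circuit containing $a_{1}$ lies inside $\{a_{1},a_{3}\}$ (that set is independent), so $a_{1}\notin cl_{M}(X)$. Running your elimination scheme on this example shows why it cannot terminate usefully: eliminating the stray $a_{4}$ from $\{a_{1},a_{3},a_{4}\}$ against $\{a_{2},a_{3},a_{4}\}$ produces $\{a_{1},a_{2}\}$, whose stray $a_{2}$ can only be eliminated against $\{a_{2},a_{3},a_{4}\}$, which returns $\{a_{1},a_{3},a_{4}\}$; the process cycles because the target circuit does not exist. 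The other inclusion also fails at precisely the loop case you left open: with $U=\{a,b\}$ and $\mathcal{C}(M)=\{\{a\},\{b\}\}$ one has $b\in cl_{M}(\{a\})$ (since $\{b\}$ is a circuit contained in $\{a,b\}$) while $\overline{R(M)}(\{a\})=\{a\}$.

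So the ``if'' half of the proposition is not true as stated, and no strategy will close your gap. For comparison, the paper's own proof of this direction is also defective: it asserts that $\bigcup\mathcal{C}(M)=U$ makes $\mathcal{C}(M)$ a partition of $U$, which is contradicted by Example~\ref{e10} itself, and it uses the identity $r_{M}(X)=|\{C\in\mathcal{C}(M):C\bigcap X\neq\emptyset\}|$, which is likewise false (for the uniform matroid with $\mathcal{C}(M)=\{U\}$ and $|U|=3$ one has $r_{M}(\{a,b\})=2$ while the right-hand side equals $1$). In fact the equality $\overline{R(M)}(X)=cl_{M}(X)$ for all $X$ requires far more than $\bigcup\mathcal{C}(M)=U$: loops must be absent and circuits of size at least three must be excluded, so that essentially every element lies in a two-element circuit. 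Your instinct that the reverse inclusion and the loops are ``the crux'' was exactly right; the correct conclusion is that the crux cannot be passed.
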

\begin{proof}
$(\Rightarrow)$: Since $cl_{M}(X)=X\bigcup\{u\in U: \exists C\in\mathcal{C}(M)\rightarrow u\in C\subseteq X\bigcup\{u\}\}=\bigcup\{C\in\mathcal{C}(M): C\bigcap X\neq\emptyset\}=\overline{R(M)}(X)$, then there exists $C\in \mathcal{C}(M)$ such that $x\in C$ for all $x\in U$, i.e., $\bigcup\mathcal{C}(M)=U$.

$(\Leftarrow)$: If $\bigcup\mathcal{C}(M)=U$, Then $R(M)$ is an equivalence relation and $\mathcal{C}(M)$ is a partition. Since
$\overline{R(M)}(X)=\bigcup\{C\in\mathcal{C}(M): C\bigcap X\neq\emptyset\}$ and $r_{M}(X)=|\{C\in\mathcal{C}(M): C\bigcap X\neq\emptyset\}|$.
If $y\in\overline{R(M)}(X)$, then $|\{C\in\mathcal{C}(M): C\bigcap X\neq\emptyset\}|\leq r_{M}(X\bigcup\{y\})\leq r_{M}(\overline{R(M)}(X))=|\{C\in\mathcal{C}(M): C\bigcap X\neq\emptyset\}|$. Therefore $y\in cl_{M}(X)$. If $y\notin\overline{R(M)}(X)$, obviously, $r_{M}(X\bigcup\{y\})=|\{C\in\mathcal{C}(M): C\bigcap X\neq\emptyset\}|+1$ holds. That is $y\notin cl_{M}(X)$. We have proved $\overline{R(M)}(X)=cl_{M}(X)$.
\end{proof}

Usually, relations on a universe $U$ can be represented by diagrams~\cite{RajagopalMason92Discrete}. In~\cite{ChenLi12AnApplication}, a new relation induced from a simple undirected graph is introduced and the authors have already proved the relation and simple graphs are one-one correspondence. Similarly, a new undirected graph introduced from $R(M)$ can be defined. For this purpose, some concepts of graph theory~\cite{Johnmurty76graph,Douglas02Introductionto} are presented as follows:

A graph is a pair $G=(U, E)$ consisting of a set $U$ of vertices and a set $E$ of edges such that $E\subseteq U\times U$. Two vertices are adjacent
if there is an edge that has them as ends. An isolated vertex is a vertex
not adjacent to any other vertex. The edges of a graph may be directed (asymmetric) or undirected (symmetric).
An undirected graph is one in which edges are symmetric. A graph is simple if
every edge links a unique pair of distinct vertices. A subgraph of
a graph $G$ is a graph whose vertices and edges are subsets of $G$.
The subgraph induced by a subset of vertices $K\subseteq U$ is called a vertex-induced subgraph of $G$, and denoted by $G_{k}$.
This subgraph has vertex set $K$, and its edge set $E'\subseteq E$ consists of those edges from
$E$ that have both their ends in $K$.

A path in a graph $G$ is a sequence $x_{1}, x_{2}, \ldots, x_{k}$ of distinct vertices such that $x_{i}x_{i+1}$ is an edge
of $G$ for $1\leq i\leq k-1$. Such a path is said to connect $x_{1}$ and $x_{k}$. A graph is connected if for every pair of distinct
vertices $x$ and $y$, there is a path connecting $x$ and $y$. In fact, every
finite graph $G$ can be partitioned into nonempty subgraphs $G(U_{1}, E_{1}), G(U_{2}, E_{2}), \ldots, G(U_{n}, E_{n})$ such that two vertices
$x$ and $y$ are connected if and only if both $x$ and $y$ belong to the same subgraph $G(U_{i}, E_{i})$. We call these subgraphs the components of $G$, and denote the number of components of $G$ by $\omega(G)$.

\begin{example}
\label{e8}
Let $G=(U, E)$ be a graph with $U=\{a, b, c, d, e\}$ and $E=\{e_{1}, e_{2}, e_{3}, e_{4}\}$. The diagram of $G$ is shown in Fig.~\ref{fig3}.
\begin{figure}
  \begin{center}
   \subfigure[$G$]
 {\label{fig3}
 \includegraphics[width=2.5cm]{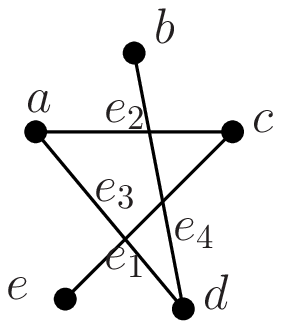}} \hspace{1cm}
 \subfigure[$G_{K}$]
 { \label{fig4}
 \includegraphics[width=2.5cm]{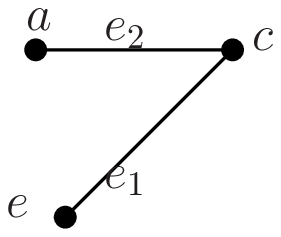}}
 \caption{The graph $G$ and the subgraph $G_{K}$.} 
  \end{center}
 \end{figure}

We see that $G$ is connected and has only one component. Let
$K =\{a, c, e\}$, the subgraph $G_{K}$
induced by the vertex set $K$ is shown in
Fig.~\ref{fig4}
\end{example}

We have already recalled some basic notions of graph theory. Now, a new undirected graph induced from a general matroid
is presented in the following definition.

\begin{definition}
Let $M(U, \mathcal{I})$ be a general matroid. We can define an undirected graph $G(M)$ as
follows: for all $x, y\in U$,
\begin{center}
$(x, y)\in E(G(M))\Leftrightarrow xR(M)y\wedge x\neq y$.
\end{center}
We say that the undirected graph $G(M)$ is introduced from $M$.
\end{definition}

\begin{example}(Continued from Example~\ref{e1})
\label{e9}
The relation induced from $M$ in Example~\ref{e1} is $R(M)=\{(a_{1}, a_{1}), (a_{1}, a_{2}), (a_{1}, a_{3}), (a_{2}, a_{1}), (a_{2}, a_{2}), (a_{2}, a_{3}), (a_{3}, a_{1}), (a_{3}, a_{2}), (a_{3},\\ a_{3}), (a_{5}, a_{5})\}$. Then the undirected graph $G(M)$ introduced from $M$ shows in Fig.~\ref{fig5}.
\begin{figure}
\begin{center}
  \includegraphics[width=3cm]{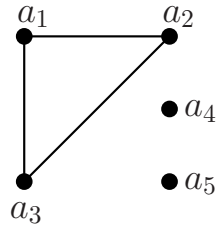}\\
  \caption{The graph introduced from $M$ }\label{fig5}
\end{center}
\end{figure}
\end{example}

The above example shows that the undirected graph induced from a general matroid is disconnected. In the following, the conditions under which the undirected graph induced from a general matroid is connected are presented.
\begin{proposition}
\label{p6}
Let $M(U, \mathcal{I})$ be a general matroid and $\bigcup\mathcal{C}(M)=U$. For all $C_{1}, C_{2}\in\mathcal{C}(M)$, if $C_{1}\bigcap C_{2}\neq\emptyset$, then $G(M)$ is connected.
\end{proposition}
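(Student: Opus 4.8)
The plan is to verify the definition of connectedness directly by showing that every pair of distinct vertices is joined by a path in $G(M)$, and in fact by a path of length at most two. Before constructing paths I would first record the structural consequences of $\bigcup\mathcal{C}(M)=U$: by Proposition~\ref{p3} this makes $R(M)$ reflexive, and together with Proposition~\ref{p1} it shows that $R(M)$ is an equivalence relation. The only feature I actually need, however, is that every vertex of $G(M)$ lies on at least one circuit, which is exactly what lets me anchor each endpoint inside a circuit and then walk from one to the other.

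Next I would fix distinct $x, y\in U$. Using $\bigcup\mathcal{C}(M)=U$, choose circuits $C_{1}, C_{2}\in\mathcal{C}(M)$ with $x\in C_{1}$ and $y\in C_{2}$. If some single circuit contains both $x$ and $y$, then $xR(M)y$ by Definition~\ref{relation-matroid}, so $(x,y)\in E(G(M))$ and the two vertices are already adjacent. Otherwise I invoke the standing hypothesis that any two circuits meet, so $C_{1}\bigcap C_{2}\neq\emptyset$ and I may pick $z\in C_{1}\bigcap C_{2}$. Then $\{x,z\}\subseteq C_{1}$ and $\{z,y\}\subseteq C_{2}$ give $xR(M)z$ and $zR(M)y$, i.e.\ the edges $(x,z)$ and $(z,y)$ of $G(M)$, so that $x,z,y$ is the desired connecting walk.

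The single point that needs care, and the only real obstacle, is that the definition of a path requires the listed vertices to be \emph{distinct}, so I must check that $x, z, y$ are pairwise distinct. This is where the case split earns its keep: if $z=x$ then $x\in C_{2}$, and since $y\in C_{2}$ this would place $x$ and $y$ in a common circuit, contradicting the branch we are in; symmetrically $z\neq y$; and $x\neq y$ holds by choice. Hence $x, z, y$ is a genuine path, and as $x, y$ were arbitrary, $G(M)$ is connected. I would note in passing that the circuit-elimination Lemma used earlier to prove transitivity is not required here: adjacency in $G(M)$ follows immediately from co-membership in one circuit, so once the intersection hypothesis is in hand the argument is purely combinatorial.
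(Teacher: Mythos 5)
Your proof is correct and follows the same underlying idea as the paper's: use the hypothesis that circuits pairwise intersect, together with $\bigcup\mathcal{C}(M)=U$, to join any two distinct vertices through a common element of two circuits. In fact your write-up is more complete than the paper's, which only observes that $\{x\}\notin\mathcal{C}(M)$ and $|R(M)N(x)|>1$ and then asserts the existence of a path; your explicit two-edge path $x,z,y$ with the distinctness check on $z$ supplies exactly the step the paper leaves implicit.
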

\begin{proof}
Since $C_{1}\bigcap C_{2}\neq\emptyset$ for all $C_{1}, C_{2}\in\mathcal{C}(M)$, then $\{x\}\notin\mathcal{C}(M)$ and $|R(M)N(x)|>1$ for all $x\in U$. Therefore there is a path between any pair of distinct vertices and $G(M)$ is connected.
\end{proof}
\begin{example}
\label{e10}
Let $M(U, \mathcal{I})$ be a general matroid on $U=\{a_{1}, a_{2}, a_{3}, a_{4}\}$, where $\mathcal{C}(M)=\{\{a_{1}, a_{2}\}, \{a_{1}, a_{3}, a_{4}\}, \{a_{2}, a_{3}, a_{4}\}\}$. Then $R(M)=U\times U$ and the undirected graph $G(M)$ introduced from $M$ shows in Fig.~\ref{fig6}. It is clearly that $G(M)$ is conneced.
\begin{figure}
\begin{center}
  \includegraphics[width=4cm]{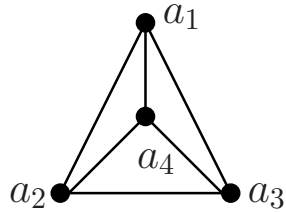}\\
  \caption{The graph introduced from $M$ }\label{fig6}
\end{center}
\end{figure}
\end{example}

\begin{proposition}
\label{p7}
Let $M(U, \mathcal{I})$ be a general matroid and $\bigcup\mathcal{C}(M)=U$. For all $\emptyset\neq X\subset U$, $G(M)$ is connected if and only if $X\neq \bigcup\{C\in\mathcal{C}(M): C\bigcap X\neq\emptyset\}$.
\end{proposition}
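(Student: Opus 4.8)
The plan is to exploit that, under the hypothesis $\bigcup\mathcal{C}(M)=U$, the relation $R(M)$ is an equivalence relation: it is symmetric and transitive by Proposition~\ref{p1}, and it is reflexive by Proposition~\ref{p3}. Hence its successor neighborhoods $R(M)N(x)$ are precisely the equivalence classes, and these classes partition $U$. The backbone observation is that the connected components of $G(M)$ coincide with these equivalence classes: two distinct vertices are adjacent exactly when they lie in a common class, so each class spans a complete (hence connected) subgraph while no edge joins distinct classes. Consequently $G(M)$ is connected if and only if $R(M)$ has a single class, i.e. $R(M)=U\times U$.

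First I would record the translation furnished by Proposition~\ref{p2}, namely $\bigcup\{C\in\mathcal{C}(M): C\cap X\neq\emptyset\}=\overline{R(M)}(X)$, so that the right-hand condition $X\neq\bigcup\{C\in\mathcal{C}(M): C\cap X\neq\emptyset\}$ becomes simply $X\neq\overline{R(M)}(X)$. Since reflexivity forces $X\subseteq\overline{R(M)}(X)$, this condition is equivalent to $\overline{R(M)}(X)\not\subseteq X$; that is, $X$ fails to be a union of equivalence classes (fails to be ``saturated''). The proposition therefore asserts that $G(M)$ is connected precisely when no nonempty proper subset of $U$ is saturated.

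For the forward implication I would assume $G(M)$ is connected, so $R(M)=U\times U$. Given any $\emptyset\neq X\subsetneq U$, I pick $z\in U\setminus X$ and $x\in X$; since $zR(M)x$, we have $x\in R(M)N(z)\cap X$, whence $z\in\overline{R(M)}(X)\setminus X$, and therefore $X\neq\overline{R(M)}(X)$, as required. For the converse I argue contrapositively: if $G(M)$ is disconnected then $R(M)$ has at least two equivalence classes, and choosing one such class $K$ produces a nonempty proper subset with $\overline{R(M)}(K)=K$ (a single class is saturated), which violates the right-hand condition. Combining the two implications yields the stated equivalence.

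The main obstacle — indeed essentially the only nonroutine point — is the clean identification of the components of $G(M)$ with the equivalence classes of $R(M)$, and dually the identification of the $\overline{R(M)}$-fixed sets with the unions of classes; once the equivalence-relation structure from Propositions~\ref{p1} and~\ref{p3} is established, everything else is bookkeeping about nonempty proper subsets. I would take particular care over the boundary behaviour, namely ensuring that a proper nonempty class actually exists in the converse (this is exactly where disconnectedness is used) and treating the degenerate case $|U|\le 1$, where the universally quantified condition holds vacuously.
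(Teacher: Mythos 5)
Your proof is correct and follows essentially the same route as the paper: the forward direction extracts an element outside $X$ lying in a circuit meeting $X$, and the converse takes a connected component (equivalence class) of a disconnected $G(M)$ as the offending saturated set. Your explicit identification of the components of $G(M)$ with the $R(M)$-equivalence classes and the rewriting of the right-hand side as $X\neq\overline{R(M)}(X)$ is just a cleaner packaging of the argument the paper already gives.
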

\begin{proof}
$(\Rightarrow)$: For any $\emptyset\neq X\subset U$, if $G(M)$ is connected, then there exist $x\in X$ and $y\in X^{C}$ such there
is an edge between $x$ and $y$. Then $y\in R(M)N(x)=\bigcup\{C\in\mathcal{C}(M): x\in C\}$. Hence $y\in R(M)N(x)$, this imlies there exists
$C'\in\mathcal{C}(M)$ such that $C'\bigcap X\neq\emptyset$ and $y\in C'$. Therefore $X\neq \bigcup\{C\in\mathcal{C}(M): C\bigcap X\neq\emptyset\}$ holds.

$(\Leftarrow)$: Suppose $G(M)$ is disconnected. Then there exist distinct vertices $x$ and $y$ such that there is not any path
connecting $x$ and $y$. Since $G(M)$ is disconnected, then there exists a connected component $G{M}_{1}=\{U_{1}, E_{1}\}$ of $G(M)$
such that $x\in U_{1}$ and $y\notin U_{1}$. Since $G(M)_{1}$ is connected, then $U_{1}=\bigcup\{C\in \mathcal{C}(M): C\bigcap U_{1}\neq\emptyset\}\subset U$, a contradiction to the assumption $U_{1}\neq \bigcup\{C\in\mathcal{C}(M): C\bigcap U_{1}\neq\emptyset\}$. Therefore
$G(M)$ is connected.
\end{proof}

\begin{example}(Continued from Example~\ref{e1} and Example~\ref{e10})
\label{e11}
In Example~\ref{e1}, since $\{a_{5}\}\subseteq \{a_{1}, a_{2}, a_{3}, a_{4}, a_{5}\}$ and $\bigcup\{C\in\mathcal{C}(M): C\bigcap\{a_{5}\}\neq\emptyset\}=\{a_{5}\}$. Thus according to Proposition~\ref{p7}, $G(M)$ is disconnected. In Example~\ref{e10}, by computing, for any $\emptyset\neq X\subset U$, $X\neq \bigcup\{C\in\mathcal{C}(M): C\bigcap X\neq\emptyset\}$ follows. Thus according to Proposition~\ref{p7}, $G(M)$ is connected.
\end{example}

The above results show that a symmetric and transitive relation or an undirected graph can be generalized
by a general matroid. Therefore, through the relation-based rough sets, the connectivity of undirected graphs
is examined from the viewpoint of relation-based rough sets. Similarly, the connectivity of a general matroid is also
can be investigated from the perspective of relation-based rough sets.

\section{The connectivity for matroids through relation-based rough sets}
\label{section3}
In this section, we study the connectivity for matroids by means of relation-based rough sets. For this purpose,
some definitions and properties of connected matroids are presented.
\begin{definition}(Connected component and connected matroid~\cite{Lai01Matroid,Oxley93Matroid})
\label{connectedmatroid}
Let $M$ be a matroid on $U$. For all $e_{1}, e_{2}\in U$, we define $e_{1}R e_{2}\Leftrightarrow e_{1}=e_{2}$ or there exists $C\in\mathcal{C}(M)$ such that $e_{1}, e_{2}\in C$. Clearly, $R$ is an equivalence relation on $U$. The $R-$equivalent classes are called the connected components of $M$.

If there is only one connected components of $M$, then we say $M$ is a connected matroid.
\end{definition}

The term "connected component" has been defined as the above definition for both graphs and matroids.
In other words, there exist closed relationships between connected graphs and connected matroids.

\begin{example}
\label{e12}
Let $M(U, \mathcal{I})$ be a matroid on $U=\{a, b, c, d\}$, where $\mathcal{I}(M)=\{X\subseteq U: |X|\leq 2\}$.
Then $\mathcal{C}(M)=\{\{a, b, c\}, \{a, b, d\}, \{a, c, d\}, \{b, c, d\}\}$. Clearly, it is easy to know that $M$ is a connected matroid by Definition~\ref{connectedmatroid}.
\end{example}

\begin{theorem}(~\cite{Lai01Matroid,Oxley93Matroid})
\label{theorem6}
The matroid $M$ is connected if and only if, for every pair of distinct elements of $U(M)$, there is a circuit containing both.
\end{theorem}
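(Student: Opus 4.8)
The plan is to unwind Definition~\ref{connectedmatroid} and show that having a single connected component is exactly the stated pairwise-circuit condition. Recall that $M$ is connected precisely when the equivalence relation $R$ of Definition~\ref{connectedmatroid} has exactly one equivalence class, i.e. when every element of $U(M)$ lies in one common $R$-class. Since $R$ is defined by $e_1 R e_2 \Leftrightarrow (e_1 = e_2 \text{ or } \exists C \in \mathcal{C}(M) \text{ with } e_1, e_2 \in C)$, the whole argument amounts to translating ``one class'' into ``every distinct pair is $R$-related'' and then reading off the defining disjunction.

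For the forward direction ($\Rightarrow$), I would assume $M$ is connected, so the unique $R$-class is all of $U(M)$. Given any two distinct $e_1, e_2 \in U(M)$, they lie in this common class, hence $e_1 R e_2$; since $e_1 \neq e_2$, the first disjunct $e_1 = e_2$ fails, so the definition forces a circuit $C \in \mathcal{C}(M)$ with $e_1, e_2 \in C$. For the reverse direction ($\Leftarrow$), I would assume every pair of distinct elements is contained in a common circuit. Then $e_1 R e_2$ holds directly for all distinct $e_1, e_2 \in U(M)$, and $e R e$ holds by reflexivity, so any two elements are $R$-related; consequently there is only one equivalence class and $M$ is connected.

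The argument is essentially definitional, so there is no heavy combinatorics on circuits to grind through; the only point requiring care is the passage between the global statement ``$R$ has exactly one equivalence class'' and the pairwise statement ``every pair of distinct elements is $R$-related.'' This equivalence is legitimate only because $R$ is genuinely an equivalence relation, which in turn relies on transitivity, and transitivity is exactly what the earlier Lemma on circuits supplies: when $e_1 R e_2$ via $C_1$ and $e_2 R e_3$ via $C_2$ with $e_1 \notin C_2$ and $e_3 \notin C_1$, the Lemma produces a circuit $C_3 \subseteq C_1 \cup C_2$ containing both $e_1$ and $e_3$, so $e_1 R e_3$. I would therefore invoke this Lemma to confirm that the partition into classes is well defined, which is what makes ``collapsing to a single class'' interchangeable with ``all elements mutually related.'' Beyond this observation, the proof needs nothing further.
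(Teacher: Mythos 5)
Your proof is correct. The paper does not actually supply a proof of this theorem---it is quoted from the cited references---so there is no internal argument to compare against; but relative to the paper's Definition~\ref{connectedmatroid}, which \emph{defines} connectedness as the circuit relation $R$ having a single equivalence class, your definitional unwinding is exactly the right (and essentially the only) argument, and you correctly isolate the one non-trivial ingredient: transitivity of $R$, supplied by the circuit-elimination lemma, is what licenses the passage between ``$R$ has one equivalence class'' and ``every distinct pair is $R$-related.'' Be aware, though, that in the cited sources this theorem is a substantive result rather than a near-tautology, because there connectivity is defined differently (e.g., $M$ is disconnected when $U$ splits into nonempty parts $X$ and $X^{C}$ with $r_{M}(X)+r_{M}(X^{C})=r_{M}(U)$), and establishing the equivalence with the pairwise-circuit condition requires genuine work with the circuit axioms; your proof inherits its brevity from the paper's decision to adopt the circuit characterization as the definition itself.
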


By the above theorem, a sufficient and necessary condition under which a matroid is connected is presented. Based on this theorem, some conditions under which a general matroid is connected can be obtained by means of relation-based rough sets.
\begin{proposition}
\label{p8}
Let $M=(U, \mathcal{I})$ be a general matroid. $M$ is connected if and only if $y\in R(M)N(x)$ for any $x, y\in U\wedge x\neq y$.
\end{proposition}
\begin{proof}
It is easy to prove this proposition by Definition~\ref{relation-matroid} and Theorem~\ref{theorem6}.
\end{proof}

\begin{example}
\label{e13}
Let $G=(V$, $U)$ be the graph as shown in Fig.\ref{fig7}. Denote $\mathcal{C}=\{C\subseteq U\mid C$ does a cycle of $G\}$, i.e., $\mathcal{C}=\{\{a_{1}, a_{2}, a_{5}\}, \{a_{3}, a_{4}, a_{5}\}, \{a_{1}, a_{2}, a_{3}, a_{4}\}\}$.
Then there exists a matroid $M=(U$, $\mathcal{I})$ such that $\mathcal{C}=\mathcal{C}(M)$, where $U=\{a_{1}, a_{2}, a_{3}, a_{4}, a_{5}\}$. Clearly, $R(M)=U\times U$. It is easy to know $M$ is a connected matroid.
\begin{figure}
\begin{center}
  \includegraphics[width=3cm]{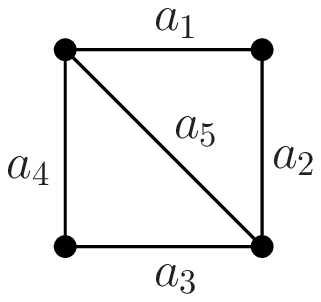}\\
  \caption{A graph}\label{fig7}
\end{center}
\end{figure}
\end{example}

\begin{proposition}
\label{p9}
Let $M=(U, \mathcal{I})$ be a general matroid. $M$ is disconnected if and only if there exists $\emptyset\neq X\subset U$ such that $ \overline{R(M)}(X)\subseteq X$.
\end{proposition}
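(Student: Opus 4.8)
The plan is to read the rough-set condition $\overline{R(M)}(X)\subseteq X$ through Proposition~\ref{p2}, which gives $\overline{R(M)}(X)=\bigcup\{C\in\mathcal{C}(M): C\cap X\neq\emptyset\}$. Under this identity the hypothesis $\overline{R(M)}(X)\subseteq X$ says exactly that every circuit meeting $X$ is contained in $X$; that is, no circuit straddles the boundary between $X$ and $X^{C}$. I would then prove both implications by matching this ``circuit-closed'' reading against the connectivity criterion of Proposition~\ref{p8} (equivalently Theorem~\ref{theorem6}), namely that $M$ is connected if and only if every two distinct elements of $U$ lie in a common circuit, i.e.\ $y\in R(M)N(x)$ whenever $x\neq y$.

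For the sufficiency direction ($\Leftarrow$), I would start from a set $\emptyset\neq X\subset U$ with $\overline{R(M)}(X)\subseteq X$. Since $X$ is nonempty and proper I may choose $x\in X$ and $y\in X^{C}$, which are automatically distinct. The claim is that no circuit contains both: if some $C\in\mathcal{C}(M)$ had $x,y\in C$, then $C\cap X\neq\emptyset$, so Proposition~\ref{p2} forces $C\subseteq\overline{R(M)}(X)\subseteq X$, contradicting $y\notin X$. Hence $y\notin R(M)N(x)$ with $x\neq y$, and the contrapositive of Proposition~\ref{p8} yields that $M$ is disconnected.

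For the necessity direction ($\Rightarrow$), suppose $M$ is disconnected. By Proposition~\ref{p8} there exist distinct $x_{0},y_{0}\in U$ with $y_{0}\notin R(M)N(x_{0})$. I would take $X=\{x_{0}\}\cup R(M)N(x_{0})$, which is precisely the $R$-equivalence class of $x_{0}$ in the sense of Definition~\ref{connectedmatroid}; it is nonempty since $x_{0}\in X$ and proper since it omits $y_{0}$. To verify $\overline{R(M)}(X)\subseteq X$ I would invoke the symmetry and transitivity of $R(M)$ from Proposition~\ref{p1}: for any $y\in R(M)N(x_{0})$ these give $R(M)N(y)=R(M)N(x_{0})$, so whenever a circuit $C$ meets $X$ at some point $z$, every other point $w\in C$ satisfies $z\,R(M)\,w$ and hence lies in $R(M)N(z)=X$; thus $C\subseteq X$. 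Applying Proposition~\ref{p2} then gives $\overline{R(M)}(X)\subseteq X$.

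I expect the only real obstacle to be verifying that the one-step neighborhood $X=\{x_{0}\}\cup R(M)N(x_{0})$ is \emph{already} circuit-closed, rather than merely a union of circuits through $x_{0}$. This is not automatic: it rests essentially on the transitivity of $R(M)$ established in Proposition~\ref{p1}, which itself comes from the circuit-elimination lemma, and this is what lets a two-step neighbor collapse to a one-step neighbor. The remaining care is bookkeeping in the corner case where $x_{0}$ is a coloop, i.e.\ $x_{0}\notin\bigcup\mathcal{C}(M)$: then $R(M)N(x_{0})=\emptyset$, so $X=\{x_{0}\}$ and $\overline{R(M)}(X)=\emptyset\subseteq X$ holds trivially, while properness still follows from $|U|\geq 2$. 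Everything else reduces to the two translations described above.
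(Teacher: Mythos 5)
Your proof is correct and follows essentially the same route as the paper's: your witness set $X=\{x_{0}\}\cup R(M)N(x_{0})$ in the necessity direction is exactly a connected component ($R$-equivalence class) of $M$, which is the paper's choice, and your sufficiency direction is the contrapositive of the paper's argument that a connected matroid admits a circuit straddling $X$ and $X^{C}$. You are in fact more careful than the paper, which merely asserts that a component satisfies $\overline{R(M)}(X)\subseteq X$ without the transitivity verification or the coloop corner case that you supply.
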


\begin{proof}
$(\Rightarrow)$: If $M$ is disconnected, then there exists at leat two connected components of $M$. Suppose $X_{1}, X_{2}, \ldots, X_{s}(s\geq 2)$ is
all the connected components of $M$. Then $\emptyset\neq X_{1}\subset U$. Suppose $X=X_{1}$. If $\overline{R(M)}(X)\nsubseteq X$, then there exists $x\in \overline{R(M)}(X)$ such that $x\notin X$. This contradiction to $X_{1}$ is
a connected component of $M$.

$(\Leftarrow)$: Suppose $M$ is connected. For all $\emptyset\neq X\subset U$, then there exist $x\in X$ and $y\in X^{C}$ such
that $x\in R(M)N(y)$. Therefore $R(M)N(y)\bigcap X\neq\emptyset$, i.e., $y\in \overline{R(M)}(X)$. It is contradictory. Hence $M$ is a disconnected
matroid.

This completes the proof.
\end{proof}

The above proposition indicates that we may use the upper relation-based approximations to determine
whether a general matroid is connected or disconnected.
\begin{example} (Continued from Example~\ref{e13})
\label{e14}
Let $M(U, \mathcal{I})$ be a general matroid on $U=\{a_{1}, a_{2}, a_{3}, a_{4}, a_{5}\}$, where $\mathcal{C}(M)=\{\{a_{1}, a_{4}\}, \{a_{1}, a_{2}, a_{5}\}, \{a_{2}, a_{4}, a_{5}\}, \{a_{3}\}\}$. Then $R(M)=\{(a_{1}, a_{1}), (a_{1}, a_{2}), (a_{1}, a_{4}), (a_{1}, a_{5}), (a_{2}, a_{1}), (a_{2}, a_{2}), (a_{2}, a_{4}), (a_{2}, a_{5}), (a_{3},\\ a_{3}), (a_{4}, a_{1}), (a_{4}, a_{2}), (a_{4}, a_{4}), (a_{4}, a_{5}), (a_{5}, a_{1}), (a_{5}, a_{2}), (a_{5}, a_{4}), (a_{5}, a_{5})\}$. By computing, we have $\overline{R(M)}(\{a_{3}\})=\{a_{3}\}$. Therefore $M$ is a disconnected
matroid.

In Example~\ref{e13}, it is easy to find that $\overline{R(M)}(X)\nsubseteq X$ for any $\emptyset\neq X\subset U$. Hence the matroid $M$
represented in Example~\ref{e13} is connected.
\end{example}

Based on the above proposition, we provide a more effective approach to determine whether a general matroid is connected or disconnected in the following corollaries.
\begin{corollary}
Let $M=(U, \mathcal{I})$ be a general matroid. $M$ is connected if and only if for any $\emptyset\neq X\subset U$, there exists $x\in\overline{R(M)}(X)$ such that $x\notin X$.
\end{corollary}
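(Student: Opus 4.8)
The plan is to obtain this corollary as the direct logical contrapositive of Proposition~\ref{p9}, without invoking any further matroid-theoretic input. Since a matroid is connected exactly when it fails to be disconnected, the two assertions should coincide once the right-hand side of Proposition~\ref{p9} is correctly negated. I would therefore treat the whole argument as a quantifier manipulation rather than a fresh derivation.

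First I would recall the equivalence supplied by Proposition~\ref{p9}: the matroid $M$ is disconnected if and only if there exists $\emptyset\neq X\subset U$ with $\overline{R(M)}(X)\subseteq X$. Negating the biconditional on both sides, the assertion ``$M$ is connected'' becomes equivalent to the negation of ``there exists $\emptyset\neq X\subset U$ such that $\overline{R(M)}(X)\subseteq X$.''

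Next I would carry out this quantifier negation explicitly. The negation of the existential statement is the universal statement: for every $\emptyset\neq X\subset U$ we have $\overline{R(M)}(X)\not\subseteq X$. Finally, I would unfold the failure of the inclusion in pointwise form, namely that $\overline{R(M)}(X)\not\subseteq X$ holds if and only if there is some $x\in\overline{R(M)}(X)$ with $x\notin X$. Substituting this last equivalence back into the universal statement yields precisely the claim of the corollary.

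There is no substantive obstacle here, as the argument is a purely formal restatement of Proposition~\ref{p9}; the only point requiring care is the correct handling of the quantifiers, ensuring that the existential over nonempty proper subsets is turned into a universal and that the inner inclusion is negated into the required pointwise existence of a witness $x\in\overline{R(M)}(X)\setminus X$.
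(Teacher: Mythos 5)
Your argument is correct and is exactly what the paper intends: the paper states that this corollary follows from Proposition~\ref{p9} with a simple proof, and your contrapositive-plus-quantifier-negation reading of Proposition~\ref{p9} is precisely that derivation.
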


\begin{corollary}
Let $M=(U, \mathcal{I})$ be a general matroid. $M$ is not connected if and only if for any $\emptyset\neq X\subset U, \overline{R(M)}(X)\bigcap X^{C}\neq\emptyset$.
\end{corollary}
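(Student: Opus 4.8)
The plan is to obtain this corollary as a purely logical consequence of Proposition~\ref{p9}, so that no new matroid-theoretic content is needed; the whole argument is set-theoretic bookkeeping. The single identity that drives it is that, for any $X\subseteq U$, the upper approximation $\overline{R(M)}(X)$ fails to stay inside $X$ precisely when it meets the complement of $X$, i.e. $\overline{R(M)}(X)\subseteq X$ if and only if $\overline{R(M)}(X)\bigcap X^{C}=\emptyset$. This is immediate from the definitions of inclusion and complement, so I would record it first and then apply it as a rewriting rule to the condition appearing in Proposition~\ref{p9}.

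With that rule in hand, I would note that \emph{not connected} is synonymous with \emph{disconnected}, so the left-hand side is already governed by Proposition~\ref{p9}: $M$ is disconnected exactly when some $\emptyset\neq X\subset U$ satisfies $\overline{R(M)}(X)\subseteq X$. Applying the rewriting rule converts this inclusion into the corresponding condition on $\overline{R(M)}(X)\bigcap X^{C}$, which is the condition displayed in the corollary. As a built-in consistency check I would compare the outcome with the preceding corollary, which characterizes \emph{connectedness} through the dual condition on $\overline{R(M)}(X)\bigcap X^{C}$; since connectedness and non-connectedness are exact negations, the two conditions must be exact logical negations of one another, and this comparison fixes the correct quantifier on each side.

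The main obstacle --- and really the only place an error can slip in --- is the joint handling of the quantifier and the complement. The negation of \emph{``there exists $X$ with $\overline{R(M)}(X)\subseteq X$''} is \emph{``for every $X$, $\overline{R(M)}(X)\bigcap X^{C}\neq\emptyset$''}, so the existential/inclusion form and the universal/nonempty-intersection form have to be matched to the right side of the equivalence; swapping them is the natural mistake. I would therefore pin down the quantifier directly from Proposition~\ref{p9}, confirm it against the preceding corollary, and only then write out the short chain of equivalences that finishes the proof.
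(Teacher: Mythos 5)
Your logical machinery is fine, but the proposal never actually lands on the statement as printed, and the step where you claim it does is false. Rewriting Proposition~\ref{p9} with your (correct) identity ``$\overline{R(M)}(X)\subseteq X$ iff $\overline{R(M)}(X)\bigcap X^{C}=\emptyset$'' yields: $M$ is disconnected iff \emph{there exists} $\emptyset\neq X\subset U$ with $\overline{R(M)}(X)\bigcap X^{C}=\emptyset$. That is an existential statement with an \emph{empty} intersection, whereas the corollary displays a universal statement with a \emph{nonempty} intersection; your second paragraph asserts that the rewritten form ``is the condition displayed in the corollary,'' which it is not --- the two conditions are each other's negations. Your third paragraph then negates correctly, but the negation of ``disconnected'' is ``connected,'' so what your chain of equivalences actually proves is: $M$ \emph{is connected} iff for any $\emptyset\neq X\subset U$, $\overline{R(M)}(X)\bigcap X^{C}\neq\emptyset$.

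That conclusion contradicts the corollary as printed, which pairs ``not connected'' with the universal nonempty-intersection condition. The ``consistency check'' you propose against the preceding corollary would not ``fix the quantifier'' --- it would expose an outright contradiction, since the preceding corollary characterizes \emph{connectedness} by the logically identical condition ``for any $X$ there exists $x\in\overline{R(M)}(X)$ with $x\notin X$.'' (The paper supplies no proof here, only the remark that the proofs are simple; the printed statement evidently has ``not connected'' where ``connected'' is intended, or else its right-hand side should read ``there exists $\emptyset\neq X\subset U$ with $\overline{R(M)}(X)\bigcap X^{C}=\emptyset$.'') To make your write-up correct you must either state that you are proving the corrected version and do so, or stop at the existential/empty-intersection form; as it stands, the assertion in your second paragraph is the step that fails, and the rest of the proposal leaves the mismatch unresolved.
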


In fact, the above two corollaries are the generalizations of Proposition~\ref{p9} and the proofs of them are simple.

According to Proposition~\ref{p3}, we know that $R(M)$ is an equivalence relation on $U$ if $\bigcup\mathcal{C}(M)=U$.
Then we can study the conditions under which a general matroid is connected if and only if $\bigcup\mathcal{C}(M)=U$.
\begin{proposition}
\label{p10}
Let $M=(U, \mathcal{I})$ be a general matroid and $\bigcup\mathcal{C}(M)=U$. $M$ is connected if and only if $\overline{R(M)}(X)\neq X$ for all $\emptyset\neq X\subset U$.
\end{proposition}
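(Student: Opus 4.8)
The plan is to derive Proposition~\ref{p10} from Proposition~\ref{p9} by showing that, under the hypothesis $\bigcup\mathcal{C}(M)=U$, the inclusion $\overline{R(M)}(X)\subseteq X$ appearing in Proposition~\ref{p9} can be upgraded to the equality $\overline{R(M)}(X)=X$. The one extra ingredient needed is that the upper approximation is extensive, i.e. $X\subseteq\overline{R(M)}(X)$ for every $X\subseteq U$; this is exactly where the assumption $\bigcup\mathcal{C}(M)=U$ enters.

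First I would record the extensivity. By Proposition~\ref{p3}, the hypothesis $\bigcup\mathcal{C}(M)=U$ makes $R(M)$ reflexive, so $x\in R(M)N(x)$ for every $x\in U$. Consequently, for any $x\in X$ we have $x\in R(M)N(x)\cap X\neq\emptyset$, whence $x\in\overline{R(M)}(X)$; thus $X\subseteq\overline{R(M)}(X)$ holds for all $X\subseteq U$. Equivalently, one may read this off the description $\overline{R(M)}(X)=\bigcup\{C\in\mathcal{C}(M):C\cap X\neq\emptyset\}$ from Proposition~\ref{p2}, since every $x\in X$ lies in some circuit $C$, and that $C$ meets $X$.

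Given this inclusion, the next step is immediate: for a fixed $\emptyset\neq X\subset U$ the condition $\overline{R(M)}(X)\subseteq X$ is equivalent to $\overline{R(M)}(X)=X$, because the reverse inclusion $X\subseteq\overline{R(M)}(X)$ is automatic. Therefore the assertions ``there exists $\emptyset\neq X\subset U$ with $\overline{R(M)}(X)\subseteq X$'' and ``there exists $\emptyset\neq X\subset U$ with $\overline{R(M)}(X)=X$'' coincide.

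Finally I would invoke Proposition~\ref{p9}, which characterises disconnectedness of $M$ by the existence of such an $X$ with $\overline{R(M)}(X)\subseteq X$; rewriting that existence statement with the equality and taking the contrapositive yields precisely that $M$ is connected if and only if $\overline{R(M)}(X)\neq X$ for all $\emptyset\neq X\subset U$. There is no real obstacle here: the entire content is the passage from inclusion to equality, and the only place the hypothesis $\bigcup\mathcal{C}(M)=U$ is used is to guarantee extensivity of $\overline{R(M)}$. If a self-contained argument were preferred, one could instead combine Proposition~\ref{p8} with Theorem~\ref{theorem6} in place of Proposition~\ref{p9}, but routing through Proposition~\ref{p9} is shorter.
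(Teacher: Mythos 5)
Your proposal is correct and follows essentially the same route as the paper: both use the hypothesis $\bigcup\mathcal{C}(M)=U$ (via reflexivity of $R(M)$) to obtain the extensivity $X\subseteq\overline{R(M)}(X)$, upgrade the inclusion in Proposition~\ref{p9} to an equality, and conclude by contraposition. Your write-up is in fact slightly more explicit than the paper's, which leaves the forward direction as ``straightforward.''
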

\begin{proof}
Since $\bigcup\mathcal{C}(M)=U$, then $R(M)$ is an equivalence relation on $U$.

$(\Rightarrow)$: It is straightforward.

$(\Leftarrow)$: If $M$ is disconnected, then there exists $\emptyset\neq X\subset U$ such that $\overline{R(M)}(X)\subseteq X$. Since
$R(M)$ is an equivalence relation on $U$, then $X\subseteq\overline{R(M)}(X)$. Hence $\overline{R(M)}(X)=X$. It is contradictory. Therefore
$\overline{R(M)}(X)\neq X$ for all $\emptyset\neq X\subset U$.
\end{proof}

Based on the Proposition~\ref{p7} and Proposition~\ref{p10}, the relationships between $G(M)$ and $M$ are established as following proposition shown.
\begin{proposition}
\label{p11}
Let $M=(U, \mathcal{I})$ be a general matroid and $\bigcup\mathcal{C}(M)=U$. $M$ is connected if and only if $G(M)$ is connected.
\end{proposition}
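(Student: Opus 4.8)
The plan is to avoid any direct reasoning about paths or $R$-equivalence classes and instead reduce the statement to the two characterizations already in hand: Proposition~\ref{p7} describes exactly when $G(M)$ is connected, and Proposition~\ref{p10} describes exactly when $M$ is connected. Since the hypothesis $\bigcup\mathcal{C}(M)=U$ is assumed, both of these earlier propositions are applicable, so the whole proof will consist of lining up their conclusions and recognizing that they are the same condition.

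First I would recall from Proposition~\ref{p10} that, under the assumption $\bigcup\mathcal{C}(M)=U$, the matroid $M$ is connected if and only if $\overline{R(M)}(X)\neq X$ for every $\emptyset\neq X\subset U$. Next I would recall from Proposition~\ref{p7} that, under the same assumption, the graph $G(M)$ is connected if and only if $X\neq\bigcup\{C\in\mathcal{C}(M): C\bigcap X\neq\emptyset\}$ for every $\emptyset\neq X\subset U$. Both characterizations have the same shape: a universally quantified inequality ``[something]$\,\neq X$'' ranging over all proper nonempty subsets $X$.

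The key step is to identify the two ``somethings'' via Proposition~\ref{p2}, which gives $\overline{R(M)}(X)=\bigcup\{C\in\mathcal{C}(M): C\bigcap X\neq\emptyset\}$ for all $X\subseteq U$. Substituting this equality shows that the condition ``$\overline{R(M)}(X)\neq X$ for all $\emptyset\neq X\subset U$'' is literally the same as the condition ``$X\neq\bigcup\{C\in\mathcal{C}(M): C\bigcap X\neq\emptyset\}$ for all $\emptyset\neq X\subset U$''. Chaining the three results then yields the desired equivalence: $M$ is connected $\Leftrightarrow$ $\overline{R(M)}(X)\neq X$ for all such $X$ $\Leftrightarrow$ $G(M)$ is connected.

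The main (and essentially only) obstacle is bookkeeping about the hypothesis rather than any real difficulty: I would verify that $\bigcup\mathcal{C}(M)=U$ is what legitimizes invoking both Proposition~\ref{p7} and Proposition~\ref{p10}, since by Proposition~\ref{p3} it makes $R(M)$ reflexive and hence an equivalence relation, which is exactly the regime in which those propositions were proved. Without this hypothesis the two quantified conditions need not coincide, because isolated singleton circuits would have to be treated separately; so I would make sure every appeal to the earlier propositions stays strictly within their stated scope.
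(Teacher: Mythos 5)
Your proposal is correct and matches the paper's own argument, which simply cites Propositions~\ref{p7} and~\ref{p10}; you merely make explicit the identification $\overline{R(M)}(X)=\bigcup\{C\in\mathcal{C}(M): C\bigcap X\neq\emptyset\}$ from Proposition~\ref{p2} that the paper leaves implicit. No gap.
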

\begin{proof}
It is easy to prove this proposition by Proposition~\ref{p7} and Proposition~\ref{p10}.
\end{proof}

\begin{example}(Continued from Example~\ref{e13})
It is easy to know that the undirected graph
$G(M)$ introduced from $M$ is shown in Fig.\ref{fig8}.
Clearly, $G(M)$ is a connected graph. In Example~\ref{e14}, we have already known the matroid $M$ represented in Example~\ref{e13} is connected.
\begin{figure}
\begin{center}
  \includegraphics[width=3cm]{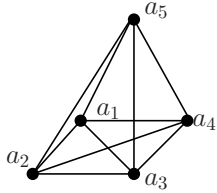}\\
  \caption{The undirected graph introduced from $M$}\label{fig8}
\end{center}
\end{figure}
\end{example}

In the following, some properties of connected matroid can be investigated from the viewpoint of relation-based rough sets.
\begin{proposition}
\label{p12}
Let $M=(U, \mathcal{I})$ be a general matroid. If $\bigcup\mathcal{C}(M)=U$, then
\begin{center}
$\overline{R(M)}(R(M)N(x))=R(M)N(x)$.
\end{center}
\end{proposition}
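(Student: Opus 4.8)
The plan is to observe that the hypothesis $\bigcup\mathcal{C}(M)=U$ upgrades $R(M)$ from a merely symmetric and transitive relation to a genuine equivalence relation, after which the claim reduces to the familiar fact that the upper approximation of an equivalence class coincides with the class itself. Concretely, by Proposition~\ref{p1} the relation $R(M)$ is symmetric and transitive, and by Proposition~\ref{p3} the assumption $\bigcup\mathcal{C}(M)=U$ forces $R(M)$ to be reflexive; hence $R(M)$ is an equivalence relation and each successor neighborhood $R(M)N(x)$ is precisely the equivalence class of $x$. The target equality $\overline{R(M)}(R(M)N(x))=R(M)N(x)$ will then be established by proving the two inclusions separately.

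For the inclusion $R(M)N(x)\subseteq\overline{R(M)}(R(M)N(x))$ I would use reflexivity. If $w\in R(M)N(x)$, then $wR(M)w$, so $w\in R(M)N(w)$ and therefore $R(M)N(w)\cap R(M)N(x)\neq\emptyset$; by the definition of the upper approximation this means $w\in\overline{R(M)}(R(M)N(x))$. This is just the general statement that a reflexive relation satisfies $X\subseteq\overline{R}(X)$ for every $X$.

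The reverse inclusion is where the symmetric and transitive properties do the real work, and is the only step requiring care. Take $z\in\overline{R(M)}(R(M)N(x))$. By definition there is a witness $w\in R(M)N(z)\cap R(M)N(x)$, so $zR(M)w$ and $xR(M)w$. Applying symmetry to the latter gives $wR(M)x$, and then transitivity applied to $zR(M)w$ and $wR(M)x$ yields $zR(M)x$; a final use of symmetry gives $xR(M)z$, i.e. $z\in R(M)N(x)$. Hence $\overline{R(M)}(R(M)N(x))\subseteq R(M)N(x)$, and combined with the first inclusion this proves the proposition.

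The main obstacle, such as it is, lies in keeping the direction of the relation straight in the reverse inclusion, since the definition of $\overline{R(M)}$ tests the neighborhood of the variable point $z$ rather than that of $x$; symmetry of $R(M)$ is exactly what reconciles the two. As an alternative I could shortcut the whole argument by first noting $R(M)N(x)=\overline{R(M)}(\{x\})$ (again by symmetry) and then invoking Proposition~\ref{p5} together with the idempotence of the closure operator to obtain $\overline{R(M)}(\overline{R(M)}(\{x\}))=\overline{R(M)}(\{x\})$; the direct equivalence-class argument is preferable because it is self-contained and does not require $cl_{M}$.
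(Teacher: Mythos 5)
Your proposal is correct and follows essentially the same route as the paper: the hypothesis $\bigcup\mathcal{C}(M)=U$ makes $R(M)$ an equivalence relation, so the neighborhoods $R(M)N(x)$ are its equivalence classes and are precise under $\overline{R(M)}$. The paper states this in one line by observing that $\{R(M)N(x): x\in U\}$ is a partition of $U$; you merely supply the two-inclusion element-chasing that the paper leaves implicit, and your details (reflexivity for $X\subseteq\overline{R(M)}(X)$, symmetry and transitivity for the reverse inclusion) are all sound.
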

\begin{proof}
If $\bigcup\mathcal{C}(M)=U$, then $\{R(M)N(x): x\in U\}$ is a partition of $U$. Thus $\overline{R(M)}(R(M)N(x))=R(M)N(x)$.
\end{proof}

The above proposition shows an important properties of classical upper rough approximations.
In other words, the $R-$equivalent class is a $R-$precise set.

This section points out some conditions under which a general matroid is connected by means of relation-based rough sets.
These results for connected matroid present new perspectives to study connectivity for matroids.
\section{Conclusions}
\label{section4}
This paper studies the connectivity for matroids by means of relation-based rough sets. First, we defined a symmetric and transitive relation $R(M)$ through a general matroid $M$. Second, based on the definition of $R(M)$, some properties such as reflexive, serial and so on are investigated in Section~\ref{section2}. Third, an undirected graph $G(M)$ was introduced from $M$ and the connectivity of $G(M)$ was explored.
Fourth, some conditions under which a general matroid $M$ is connected were presented from the viewpoint of relation-based rough sets. Finally, we have proved that the connectivity of $M$ and $G(M)$ is equivalent if $\mathcal{C}(M)$ is a covering of $U(M)$.

\section{Acknowledgments}
This work is in part supported by the National Science Foundation of China
under Grant Nos. 61170128, 61379049, and 61379089,
the Natural Science Foundation of Fujian Province, China under Grant No. 2012J01294,
the Fujian Province Foundation of Higher Education under Grant No. JK2012028, and
the Education Department of Fujian Province under Grant No. JA12222.

\end{document}